\newtheorem{theorem}{Theorem}
\newtheorem{lemma}{Lemma}
\newtheorem{corollary}{Corollary}
\author{
Nathan Wiebe\\
Microsoft Research\\
Redmond, WA 98052 \\
\texttt{nawiebe@microsoft.com} \\
\And
Ashish Kapoor \\
Microsoft Research\\
Redmond, WA 98052\\
\texttt{akapoor@microsoft.com} \\
\And
Christopher E. Granade \\
University of Sydney \\
Sydney, NSW 2006\\
\texttt{cgranade@cgranade.com} \\
\And
Krysta M. Svore \\
Microsoft Research\\
Redmond, WA 98052\\
\texttt{ksvore@microsoft.com}
}
\begin{document}

\def\ket#1{\left|#1\right\rangle}
\def\bra#1{\langle#1|}
\newcommand{\ketbra}[2]{|#1\rangle\!\langle#2|}
\newcommand{\braket}[2]{\langle#1|#2\rangle}
\newcommand{\prob}[1]{{\rm Pr}\left(#1 \right)}
\newcommand{\expect}[2]{{\mathbb{E}_{#2}}\!\left\{#1 \right\}}
\newcommand{\var}[2]{{\mathbb{V}_{#2}}\!\left\{#1 \right\}}
\newcommand{\CRej}{{\rm RejS}}
\newcommand{\CSMC}{{\rm SMC}}

\newcommand{\sinc}{{sinc}}


\newcommand{\sde}{\mathrm{sde}}
\newcommand{\Z}{\mathbb{Z}}
\newcommand{\RR}{\mathbb{R}}
\newcommand{\w}{\omega}
\newcommand{\Kap}{\kappa}

\newcommand{\Tchar}{$T$}
\newcommand{\T}{\Tchar~}
\newcommand{\TT}{\mathrm{T}}
\newcommand{\ClT}{\{{\rm Clifford}, \Tchar\}~}
\newcommand{\Tcount}{\Tchar--count~}
\newcommand{\Tcountper}{\Tchar--count}
\newcommand{\Tcounts}{\Tchar--counts~}
\newcommand{\Tdepth}{\Tchar--depth~}
\newcommand{\Zr}{\Z[i,1/\sqrt{2}]}
\newcommand{\ve}{\varepsilon}

\newcommand{\eq}[1]{\hyperref[eq:#1]{(\ref*{eq:#1})}}
\renewcommand{\sec}[1]{\hyperref[sec:#1]{Section~\ref*{sec:#1}}}
\newcommand{\app}[1]{\hyperref[app:#1]{Appendix~\ref*{app:#1}}}
\newcommand{\fig}[1]{\hyperref[fig:#1]{Figure~\ref*{fig:#1}}}
\newcommand{\thm}[1]{\hyperref[thm:#1]{Theorem~\ref*{thm:#1}}}
\newcommand{\lem}[1]{\hyperref[lem:#1]{Lemma~\ref*{lem:#1}}}
\newcommand{\tab}[1]{\hyperref[tab:#1]{Table~\ref*{tab:#1}}}
\newcommand{\cor}[1]{\hyperref[cor:#1]{Corollary~\ref*{cor:#1}}}
\newcommand{\alg}[1]{\hyperref[alg:#1]{Algorithm~\ref*{alg:#1}}}
\newcommand{\defn}[1]{\hyperref[def:#1]{Definition~\ref*{def:#1}}}

\newcommand{\targfix}{\qw {\xy {<0em,0em> \ar @{ - } +<.39em,0em>
\ar @{ - } -<.39em,0em> \ar @{ - } +
<0em,.39em> \ar @{ - }
-<0em,.39em>},<0em,0em>*{\rule{.01em}{.01em}}*+<.8em>\frm{o}
\endxy}}

\newenvironment{proofof}[1]{\begin{trivlist}\item[]{\flushleft\it
Proof of~#1.}}
{\qed\end{trivlist}}

\newcommand{\cu}[1]{{\textcolor{red}{#1}}}
\newcommand{\tout}[1]{{}}
\newcommand{\good}{{\rm good}}
\newcommand{\bad}{{\rm bad}}
\newcommand{\dd}{\mathrm{d}}

\newcommand{\id}{\openone}
\title{Quantum Inspired Training for Boltzmann Machines}

\nipsfinalcopy 
\maketitle
\begin{abstract}
We present an efficient classical algorithm for training deep Boltzmann machines (DBMs) that uses rejection sampling in concert with variational approximations to estimate the gradients of the training objective function.
Our algorithm is inspired by a recent {\it quantum} algorithm for training DBMs \cite{wiebe_quantum_2014}.
We obtain rigorous bounds on the errors in the approximate gradients; in turn, we find that choosing the instrumental distribution to minimize the $\alpha=2$ divergence
with the Gibbs state minimizes the asymptotic algorithmic complexity.
Our rejection sampling approach can yield more accurate gradients than low-order contrastive divergence training and  
the costs incurred in finding increasingly accurate gradients can be easily parallelized.
Finally our algorithm can train full Boltzmann machines and scales more favorably with the number of layers
in a DBM than greedy contrastive divergence training.  

\end{abstract}
\section{Introduction}
\label{sec:intro}

In 2002, Hinton provided the first efficient algorithm for training Boltzmann machines~\cite{hinton_training_2002}, a type of stochastic recurrent neural network with undirected edges, called contrastive divergence (CD).  
Due to the emergence of CD, Boltzmann machines, specifically restricted Boltzmann machines (RBMs) and their deep layered counterparts (DBMs), have become standard tools
for solving problems in vision and speech recognition~\cite{JH11,EHW+14}.  Despite the remarkable successes that contrastive divergence has achieved, there are a number of theoretical
and practical drawbacks to the use of the algorithm.
A major drawback of contrastive divergence training is that it implicitly adds directionality to the edges in the graphical model, which lessens any advantages Boltzmann machines may have over feed--forward neural nets.  On a more practical level, parallelism cannot be used to increase the accuracy of contrastive divergence training which means that only
the lowest--order (and least accurate) contrastive divergence approximation is used.

Recent work in quantum computing has revealed a class of training algorithms 
that use a quantum form of rejection sampling to overcome these problems~\cite{wiebe_quantum_2014}.
The approach hinges on refining quantum states that crudely approximate the joint and conditional probability distributions for the units in the Boltzmann machine into ones that closely mimic them.  
This process is expected to yield accurate and efficient approximations to the  distribution if sufficiently strong regularization is used in the training process~\cite{WH02,Jor99}.
The gradients of the training objective function (the average log--likelihood of the BM producing the training data) are then estimated by either sampling from the resulting quantum states or
by using techniques like quantum amplitude amplification and estimation.
Amplitude amplification results in a quadratic speedup with respect to the acceptance probability of the rejection step and amplitude estimation quadratically reduces the number of training vectors (at the price of quadratically worsening the scaling with $\mathcal{E}$)~\cite{wiebe_quantum_2014}.  These advantages are summarized in~\tab{runtimes}.

This quantum approach has several significant features.  
First, it is a very natural method for training a Boltzmann machine using a quantum computer because it only involves state preparation and measurement.
Second, it does not explicitly depend on the interaction graph used:  it can be used to train
full Boltzmann machines rather than just DBMs.  
Third, it is mathematically easy to verify that the approximate gradients converge to the true gradients in the appropriate limit.
Finally, it does not have a well known classical counterpart, unlike many existing quantum machine learning results.

Unfortunately, quantum computers are currently limited to tens of quantum bits, which means that~\cite{wiebe_quantum_2014} can
 only be used to train impractically small Boltzmann machines using present-day hardware.
Consequently, determining a classical analogue of the quantum approach would embue classical training methods with the quantum advantages.
The challenge is whether the classical analogue would still result in efficient training.
We present such a classical analogue in this paper.

We call our approach \emph{Instrumental Rejection Sampling} (IRS). It retains all of the theoretical advantages of the quantum algorithm, while providing practical advantages for training highly optimized
deep Boltzmann machines in the presence of regularization.   It is worth noting that our approach is not specific to Boltzmann machines: it also applies to more general classes of undirected graphical models with latent variables.

Here we investigate the quality of the gradients yielded by this method and provide theoretical and numerical evidence that training using IRS 
is not only efficient, but it may also convey practical advantages for training 
certain classes of deep Boltzmann machines.  
Since these insights stemmed from recent progress on quantum machine learning, our work underscores the value of investigating quantum paradigms for machine learning even in the absence of large--scale quantum computers.

\begin{table}[t!]
\centering
\begin{tabular}{|c|c|c|c|}
\hline
&Greedy CD-$k$ training & IRS training&Quantum training\\
\hline
Complexity &$O(N_{\rm epoch} N_{\rm train} k \mathcal{E} \ell)$ & $O(N_{\rm epoch} N_{\rm train} \kappa\mathcal{E} )$&$O(N_{\rm epoch}N_{\rm train} \mathcal{E} \sqrt{\kappa} )$\\
\hline
Depth &$O(N_{\rm epoch}k \ell \log[ N_{\rm train} \mathcal{E}])$ &$O(N_{\rm epoch} \log[N_{\rm train} \kappa\mathcal{E}] )$&$O(N_{\rm epoch} \log[N_{\rm train} \kappa\mathcal{E}] )$\\
\hline
\end{tabular}
\caption{Time complexities for training an $\ell$--layer DBM with $\mathcal{E}$ edges using greedy contrastive divergence, our algorithm and quantum training.  The latter two algorithms are not restricted to DBMs.  Depth is the time needed for a cluster with an unbounded number of computational nodes.\label{tab:runtimes}}
\end{table}
\section{Instrumental Rejection Sampling}\label{sec:theory}
The main insight behind our result, and stemming from~\cite{wiebe_quantum_2014}, is that variational approximations to the Gibbs state can be used to make 
training with rejection sampling much more efficient than it would initially appear.  This result is conceptually related to work by Murray and Gharhramani~\cite{MG04} in the context
of MCMC algorithms; whereas, our results hold for rejection sampling and also show how to choose the approximation to minimize the error in the sample distribution.
We present the general theory of IRS here and apply it to training DBMs in~\sec{IRS}.

Rejection sampling seeks to draw samples from a distribution $P(x)/Z:= P(x)/\sum_x P(x)$ that cannot be sampled from directly by
sampling instead from an instrumental distribution, $Q$, and rejecting the samples with a probability $P(x)/Z\kappa Q(x)$.  Here $\kappa$ is
a normalizing constant introduced to ensure that the rejection probability is well defined.  In other words, we draw samples from $Q(x)$ and reject them with probability
\begin{equation}
{\rm Pr}_{\rm accept}(x|Q(x),\kappa, Z) = \frac{P(x)}{Z \kappa Q(x)},\label{eq:rejprob}
\end{equation}
until a sample is accepted.  This can be implemented by drawing $y$ uniformly from $[0,1]$ and accepting $x$ if $y\le {\rm Pr}_{\rm accept}(x|Q(x),\kappa, Z_Q)$.

In applications such as training Boltzmann machines, the constants needed to normalize~\eq{rejprob} are not known or may be prohibitively large for some $x$.  
This can be addressed using a form of approximate rejection sampling where we use $\kappa_A < \kappa$ and $Z_Q \approx Z$ such that $ \frac{P(x)}{Z_Q \kappa_A Q(x)}>1$ for some set of configurations which we call \emph{bad}.  The approximate rejection sampling algorithm then proceeds as the precise rejection sampling algorithm except that the sample $x$ will always be accepted if $x\in {\rm bad}$.  This means that the samples yielded by approximate rejection sampling are not precisely drawn from $P/Z$. Error estimates for this sampling process are given below.

\begin{theorem}
Let $Q(x):x\in \mathbb{Z}^+_{2^n}$ be an efficiently computable probability distribution that can be efficiently sampled from.  Assume that $P(x):x\in \mathbb{Z}^+_{2^n}$ can be efficiently computed and $P(x)/Z_Q Q(x) > \kappa_A~\forall~x\in {\rm bad} \subseteq\mathbb{Z}^+_{2^n}$ where 
$$\sum_{x\in {\rm bad}}  \left(P(x)-\kappa_A Z_Q Q(x)\right) \le \epsilon Z.$$
  There then exists an efficient classical algorithm that samples from a distribution $\tilde{P}$ such that ${\sum_x \sqrt{\tilde P(x) P(x) }} \ge {\sqrt Z}(1-\epsilon)$. The probability of accepting a sample is at least $\frac{Z(1-\epsilon)}{Z_Q\kappa_A}$.\label{thm:kappa}
\end{theorem}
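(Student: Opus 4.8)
The plan is to analyze the approximate rejection sampling procedure described above directly: draw $x$ from $Q$ and accept it with probability $\min\{1,\,P(x)/(Z_Q\kappa_A Q(x))\}$, which equals $1$ precisely on the ${\rm bad}$ set and equals $P(x)/(Z_Q\kappa_A Q(x))$ elsewhere. First I would write down the \emph{unnormalized} weight of the accepted distribution, namely $\hat P(x) := Z_Q\kappa_A\, Q(x)\min\{1,\,P(x)/(Z_Q\kappa_A Q(x))\} = \min\{P(x),\,Z_Q\kappa_A Q(x)\}$, so that the output distribution is $\tilde P(x)=\hat P(x)/S$ with $S:=\sum_y \hat P(y)$. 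The key structural observation, which drives both claims, is that $\hat P(x)\le P(x)$ for every $x$, with equality off the ${\rm bad}$ set.

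Second, I would compute the normalization $S$ and the acceptance probability. Splitting the sum over good and bad configurations gives
\begin{equation}
S = \sum_{x\notin {\rm bad}} P(x) + \sum_{x\in {\rm bad}} Z_Q\kappa_A Q(x) = Z - \sum_{x\in {\rm bad}}\left(P(x)-Z_Q\kappa_A Q(x)\right),
\end{equation}
and the hypothesis $\sum_{x\in {\rm bad}}(P(x)-\kappa_A Z_Q Q(x))\le \epsilon Z$ immediately yields $S\ge Z(1-\epsilon)$. Since the total acceptance probability satisfies $p_{\rm accept}=\sum_x Q(x)\min\{1,\,P(x)/(Z_Q\kappa_A Q(x))\}=S/(Z_Q\kappa_A)$, this already establishes the second claim, $p_{\rm accept}\ge Z(1-\epsilon)/(Z_Q\kappa_A)$.

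Third, for the fidelity-type bound I would exploit $\hat P(x)\le P(x)$ to write $\sqrt{\hat P(x)P(x)}\ge \sqrt{\hat P(x)\hat P(x)}=\hat P(x)$, and then sum:
\begin{equation}
\sum_x\sqrt{\tilde P(x)P(x)} = \frac{1}{\sqrt S}\sum_x\sqrt{\hat P(x)P(x)}\ge \frac{1}{\sqrt S}\sum_x \hat P(x) = \sqrt S \ge \sqrt{Z(1-\epsilon)}\ge \sqrt Z(1-\epsilon),
\end{equation}
where the final inequality uses $\sqrt{1-\epsilon}\ge 1-\epsilon$ for $\epsilon\in[0,1]$. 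Efficiency then follows from the hypotheses: $Q$ can be sampled and $P,Q$ evaluated efficiently, so each accept/reject trial is efficient, and the expected number of trials is $1/p_{\rm accept}$.

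I expect the only genuine subtlety to be bookkeeping around the two normalizers $Z$ and $Z_Q$: one must resist assuming $Z_Q=Z$ and instead carry $Z_Q\kappa_A$ through as the overall scale of the accepted weights, letting it cancel in the ratio defining $\tilde P$ and reappear only as the factor relating $S$ to $p_{\rm accept}$. Everything else reduces to the elementary facts $\min\{P,\,Z_Q\kappa_A Q\}\le P$ and $\sqrt{1-\epsilon}\ge 1-\epsilon$, so the argument is short once the correct unnormalized weight $\hat P=\min\{P,\,Z_Q\kappa_A Q\}$ is identified.
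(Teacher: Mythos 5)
Your proof is correct and follows essentially the same route as the paper's: the same good/bad decomposition, the same normalization bound $S \ge Z(1-\epsilon)$ giving the acceptance probability, and the same fidelity estimate via $\hat P=\min\{P,\,Z_Q\kappa_A Q\}\le P$. The only difference is presentational --- the paper packages the identical computation as a quantum state preparation followed by a de-quantization step (commuting the coin and sample measurements), whereas you work directly with the classical accept/reject weights.
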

Our proof of this theorem uses quantum techniques and is given in the appendix.
\thm{kappa} shows that if the conditions normally required of rejection sampling are not met then an approximate sampling algorithm exists that is promised to be close to the distribution if the chosen value of $\kappa_A$ is sufficiently large.  Since the acceptance rate of the sampling algorithm scales inversely with $\kappa_A$, elementary choices of $Q$ (such as the uniform distribution) are unlikely to efficiently produce accurate samples from $P(x)/Z$ for polynomially large $\kappa_A$.  We call our approach {Instrumental Rejection Sampling} training to emphasize that we use a non-trivial $Q$ to draw the samples.

In order to minimize the error in~\thm{kappa}, we want to choose $Q$ to minimize an appropriate divergence between $Q$ and $P/Z$.
The choice of divergence that $Q$ minimizes is by no means unique.  The result of~\cite{wiebe_quantum_2014} chooses $Q$ to be a product distribution that minimizes ${\rm KL}(Q||P/Z)$ known as the mean--field distribution; however, this choice only provides a polynomial penalty for using an exponentially poor approximation to the tail probability.  

We can address this problem by using the method of~\cite{Min05} to find $Q$ to minimize a divergence that does not de--emphasize these tail probabilities.  This method finds $Q$ that minimizes $D_{\alpha}(P/Z||Q)$ where 
\begin{equation}
D_{\alpha}(p\,||\, q) =\frac{\int_x \alpha p(x) + (1-\alpha)q(x) -p(x)^\alpha q(x)^{1-\alpha} \mathrm{d}x}{\alpha(1-\alpha)}.
\end{equation}
Note that $\lim_{\alpha\rightarrow 0} D_{\alpha}(p||q) = {\rm KL}(q||p)$ and hence the method of~\cite{Min05} generalizes the mean--field approximation.
The following lemma shows that choosing $Q=Q_{\alpha=2}$ to be the $Q$ that minimizes $D_2(P/Z||Q)$ also minimizes an upper bound on the rejection sampling error.
\begin{lemma}\label{lem:alpha}
Let $Q(x): x\in \mathbb{Z}^+_{2^n}$ and $P(x)/Z : x\in \mathbb{Z}^+_{2^n}$ be probability distributions then
$$
\frac{\sum_{x\in {\rm bad}} \left(P(x) -\kappa_A Z_Q Q(x)\right)}{Z}\in O\left(\frac{Z  D_2(P/Z\, ||\, Q)}{\kappa_AZ_Q}\right),
$$
where $D_2(P/Z \, || \, Q)=\frac{1}{2}\sum_x (P(x)/Z - Q(x))^2/Q(x)$ is the $\alpha=2$ divergence and we consider the asymptotic regime where $D_2(P/Z \, || \, Q)\gg 1$.
\end{lemma}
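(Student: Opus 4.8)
The plan is to reduce the whole statement to a single chi-squared–type moment and then exploit the definition of the bad set. First I would clean up notation: write $p(x) := P(x)/Z$, so that $p$ is a genuine probability distribution with $\sum_x p(x) = 1$, and absorb all the constants into one threshold $\lambda := \kappa_A Z_Q / Z$. With this notation the defining condition $P(x)/(Z_Q Q(x)) > \kappa_A$ for $x \in {\rm bad}$ becomes simply $p(x)/Q(x) > \lambda$, and the left-hand side of the lemma becomes $\sum_{x\in{\rm bad}}\bigl(p(x) - \lambda Q(x)\bigr)$. The target bound $O\!\left(Z D_2/(\kappa_A Z_Q)\right)$ is then exactly $O(D_2/\lambda)$, so the lemma is really a statement about how much $p$-mass can sit on the set where the likelihood ratio $p/Q$ exceeds $\lambda$.

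Second, I would rewrite the divergence as a moment. Expanding $D_2(p\,||\,Q) = \tfrac12\sum_x (p(x)-Q(x))^2/Q(x)$ and using $\sum_x p(x) = \sum_x Q(x) = 1$ yields the identity $\sum_x p(x)^2/Q(x) = 1 + 2\,D_2(p\,||\,Q)$. This is the key algebraic step: it converts the divergence into control over the second moment of $p/Q$ weighted by $p$.

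Third comes the main inequality. On the bad set the ratio is large, $p(x)/Q(x) > \lambda$, hence $p(x)^2/Q(x) = \bigl(p(x)/Q(x)\bigr)\,p(x) > \lambda\, p(x)$. Summing over ${\rm bad}$ and bounding the restricted sum by the full sum gives
$$\lambda \sum_{x\in{\rm bad}} p(x) < \sum_{x\in{\rm bad}} \frac{p(x)^2}{Q(x)} \le \sum_x \frac{p(x)^2}{Q(x)} = 1 + 2 D_2,$$
so $\sum_{x\in{\rm bad}} p(x) < (1+2D_2)/\lambda$. Since every summand obeys $0 < p(x)-\lambda Q(x) \le p(x)$ on the bad set, I conclude $\sum_{x\in{\rm bad}}\bigl(p(x)-\lambda Q(x)\bigr) \le \sum_{x\in{\rm bad}} p(x) < (1+2D_2)/\lambda$. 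In the stated regime $D_2 \gg 1$ the additive constant is negligible, $1+2D_2 \in O(D_2)$, which turns the estimate into $O(D_2/\lambda) = O\!\left(Z D_2/(\kappa_A Z_Q)\right)$, as claimed.

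I expect the only delicate point to be pinning down where the asymptotic assumption $D_2 \gg 1$ is actually used: it is precisely what lets me discard the additive $1$ (equivalently, the normalization of $p$) and report the clean leading constant rather than a $1+2D_2$. A secondary sanity check is that bounding $p-\lambda Q \le p$ does not lose an order of magnitude; since this only strengthens the inequality it is harmless, and the discarded term $\lambda\sum_{\rm bad} Q$ is itself $O(D_2/\lambda)$ by the same computation, so the final estimate is tight up to the constant factor.
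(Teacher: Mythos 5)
Your proposal is correct and follows essentially the same route as the paper's proof: bound the signed sum by $\sum_{x\in{\rm bad}}P(x)/Z$, apply Markov's inequality to the likelihood ratio $P/(ZQ)$ under $P/Z$ (which you prove directly rather than cite), rewrite $\sum_x P^2(x)/(Z^2Q(x))$ as $1+2D_2$, and invoke $D_2\gg 1$ to absorb the additive constant. The only cosmetic differences are your change of variables to $p$ and $\lambda$ and your explicit remark that the discarded term is of the same order.
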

\begin{proof}
There are two cases, ${\rm bad} =\emptyset$ and ${\rm bad}\ne \emptyset$.  If we have the former case then $\sum_{x\in {\rm bad}} \left(P(x) -\kappa_A Z_Q Q(x)\right)=0$ and the result trivially follows.  Now let us focus on the case where the set is non--empty.
Since $Q(x)$ is a probability distribution and $\kappa_A\ge 1$
\begin{equation}
\frac{\sum_{x\in {\rm bad}} P(x) -\kappa_AZ_Q Q(x)}{Z}\le \frac{\sum_{x\in {\rm bad}} P(x)}{Z}.\label{eq:Pbound0}
\end{equation}
In order to estimate the probability, it is helpful to think of the problem as a sampling problem for the random variable $P(x)/Q(x)$ where $x\sim P(x)/Z$.  The desired probability is then bounded above (using Markov's inequality) by
\begin{eqnarray}
\sum_{x\in {\rm bad}} P(x)/Z &=& \Pr_{x\sim P(x)/Z}(P(x)/Z_QQ(x) > \kappa_A)\nonumber\\
&\le& \Pr_{x\sim P(x)/Z}(P(x)/Z_QQ(x) \ge \kappa_A)\nonumber\\
&\le& \frac{\mathbb{E}( P/Q)}{\kappa_AZ_Q}=\sum_x\frac{ P^2(x)}{ZQ(x)\kappa_AZ_Q}.\label{eq:Pbound}
\end{eqnarray}
Rewriting the $\alpha$--divergence as a sum and using the fact that $P/Z$ and $Q$ are normalized to $1$ leads to
\begin{equation}
D_2(P/Z \, || \, Q) = \frac{1}{2}\sum_x \frac{P^2(x)}{Z^2 Q(x)}-\frac{1}{2}.\label{eq:D2alpha}
\end{equation}
Thus~\eq{Pbound} and~\eq{D2alpha} imply
\begin{equation}
\frac{\sum_{x\in {\rm bad}} P(x)}{Z} \le\frac{Z(2D_2(P/Z \, || \, Q)+1)}{\kappa_AZ_Q},
\end{equation}
which completes the proof under the assumption that $D_2(P/Z||Q) \gg 1$.
\end{proof}
~\lem{alpha} shows that choosing $Q$ to minimize $D_2$ will asymptotically minimize the approximation error.  In contrast, no corresponding result has been rigorously shown for ${\rm KL}(Q||P/Z)$.  

\section{Approximate training of Boltzmann machines using IRS \label{sec:IRS}}
We now show how instrumental rejection sampling can be applied to training Boltzmann machines.
Boltzmann machines model the training data as an Ising model in thermal equilibrium with its environment.  
The goal of training is to adjust the parameters of the Ising model to maximize the likelihood that the observed training data would emerge from
the thermal distribution in the system.  

A Boltzmann machine consists of spins (bits) which are composed of $n_v$ visible units and $n_h$ hidden units~\cite{hinton_training_2002,Ben09}.  
The visible units represent the input and output of the model and the hidden units are used to generate appropriate correlations
between the features in the input and output.  The correlations can be visualized as edges in a graphical model between pairs of hidden and visible units.  In general, the underlying graph can be 
a complete graph; however, in practice a layered bipartite graph is usually preferred since it admits efficient training.

The restricted Boltzmann machine (RBM) consists of two layers of units, where each layer consists of either exclusively hidden or exclusively visible units.
Edges are restricted to inter-layer correlations.
The unnormalized probability of a given configuration of hidden and visible units is
\begin{equation}
P(v,h)={e^{-E(v,h)}},\label{eq:gibbs}
\end{equation}
where $P/Z$ for $Z= \sum_{v,h} e^{-E(v,h)}$ is the normalized joint probability distribution and
\begin{equation}
E(v,h) = -\sum_i b_i v_i -\sum_j d_j h_j -\sum_{i,j} w_{i,j} v_i h_j,\label{eq:Energy}
\end{equation}
for binary units $v_i \in \mathbb{Z}_2$ and $h_j \in \mathbb{Z}_2$.  The vectors $b$ and $d$ are called biases and set the probability
of a unit being zero or one irrespective of the values of the adjacent units.  The weights $w$ provide energy penalties for two adjacent units taking the value $1$.

Training the model formally involves maximizing the average log--likelihood of the training data emerging from the model.
This can be thought of as an optimization problem with objective function
\begin{equation}
O_{\rm ML} = -\frac{1}{N_{\rm train}}\sum_{x\in x_{\rm train}} \log\left(\sum_h e^{-E(x,h)}\right) - \log(Z) -\frac{\lambda}{2}w^Tw,
\end{equation}
where $\lambda w^Tw/2$ is a regularization term introduced to combat overfitting.

Unfortunately, calculation of the objective function is in general intractable as calculation of the partition function $Z$ is $\#P$ complete for 
non--planar Ising models.  Nonetheless, the gradient of $O_{\rm ML}$ can be estimated without knowing the log--partition function~\cite{hinton_training_2002}.
\begin{equation}
\frac{\partial O_{\rm ML}}{\partial w_{i,j}} = \langle v_i h_j \rangle_{\rm data} -\langle v_i h_j \rangle_{\rm model} - \lambda w_{i,j}.\label{eq:grad}
\end{equation}
Here the expectation value over the data refers to the average of the corresponding pair of visible and hidden units given the constraint that the
visible units are clamped to the values of the individual training vectors (the hidden units are allowed to vary and take values given by the Gibbs distribution).
The expectation value over the model corresponds to the average value of the product of the pair of units for the Ising model when the visible units
are not constrained to take values in the training set.  The derivatives of $O_{\rm ML}$ in the directions of the biases are given in the appendix.

Despite the innocuous appearance of the gradient in~\eq{grad}, it can still be challenging to compute because the expectation values
require drawing samples from a Gibbs distribution described by~\eq{gibbs}, which is itself an \NP-hard task in general.  This intractability is sidestepped through the use of approximate methods such as contrastive divergence~\cite{hinton_training_2002,WH02}.  CD training is efficient, but it has theoretical and practical shortcomings \cite{ST10} that we aim to address with Instrumental Rejection Sampling.

%

\subsection{Approximate Gibbs distributions}
Many methods can be employed to provide an estimate of the Gibbs distribution $P/Z$.  Perhaps the simplest is the mean--field approximation (MF), which takes $Q$ to be a factorized probability distribution over all hidden and visible units in the graphical model.  Among these factorized distributions, $Q$ is taken to be the one that is closest to $\Pr(v,h)=e^{-E(v,h)}/Z$, where closest means that it minimizes ${\rm KL}(Q\, ||\, e^{-E}/Z)$.  In particular, for an RBM
$$Q_{\rm MF}(v,h) = \prod_{j=1}^{n_v} \mu_j^{v_j}(1-\mu_j)^{1-v_j}\prod_{k=1}^{n_h} \nu_k^{h_k}(1-\nu_k)^{1-h_k},$$
where
\begin{eqnarray}
\mu_j = (1+e^{-b_j - \sum_{k} w_{jk} \nu_k})^{-1},\nonumber\\
\nu_k = (1+e^{-d_k - \sum_{j} w_{jk} \mu_j})^{-1}.\label{eq:munu}
\end{eqnarray}
 The optimal mean--field parameters can be approximated by solving~\eq{munu} using fixed point iteration.
The MF approximation for generic Boltzmann machines takes a very similar form~\cite{WH02}.  Note that here $Q(v,h)$ is 
a product distribution, but multi--modal or structured mean--field approximations can be used instead in
cases where the MF approximation is expected to break down~\cite{Jor99}.

Although the mean--field approximation is expedient to compute, \lem{alpha} suggests that choosing $Q$ to minimize $D_2$ will asymptotically minimize the algorithm's complexity.   We refer to the product distribution that $D_2(P/Z||Q)$ as $Q_{\alpha=2}$.  The minimization strategy used to find $Q_{\rm MF}$ does not work for $Q_{\alpha=2}$ because $D_2$ does not contain logarithms and hence more general methods, such as fractional belief propagation, are needed.    Fractional belief propagation works by choosing $Q$ to variationally minimize an upper bound on the log--partition function that corresponds to the choice $\alpha=2$.  The algorithm is explained in detail in~\cite{WH03,Min05}.


In order to maximize the probability of success, it is useful to have an estimate of  the partition function $Z$.
The log--partition function can be estimated for any product distribution $Q$ using 
\begin{equation}
\log(Z) \ge \log(Z_{Q}):= \sum_x Q(x) \log\left(\frac{e^{-E(x)}}{Q(x)} \right)= -\langle E \rangle -H[Q(x)],\label{eq:variationalbd}
\end{equation}
where $H[Q(x)]$ is the Shannon entropy of $Q(x)$ and $\langle E \rangle$ is the expected energy in the state $Q$.   Equation~\eq{variationalbd} holds with equality if and only if $Q(x) = e^{-E(x)}/Z$.  The slack in the inequality is the Kullback--Leibler divergence ${\rm KL}(Q\, ||\, e^{-E(x)}/Z)$, which means that if $Z_Q = -\langle E \rangle -H[Q(x)]$ then the approximation will become more accurate as $Q$ approaches the Gibbs distribution~\cite{Min05}.  We denote this mean--field approximation to the partition function as $Z_{\rm MF}$.
Other tractable approximations to the log--partition function can also be used~\cite{Min05,wainwright_tree_2013}.   For simplicity, we use $Z_{\rm MF}$ for the majority of the subsequent numerical experiments.

\begin{algorithm}[t!]
\rule{\linewidth}{1pt}
\begin{algorithmic}
\vskip0.2em
\hrule
\vskip0.2em
\Repeat
\State Compute distribution $Q$ from $w$ $b$ and $d$.
\State Compute $Z_Q$ from $Q$.
\For{each Training vector $x\in x_{\rm train}$}
\State Compute distribution $\mathcal{Q}(h|x)$ from $x$, $w$, $b$ and $d$.
\State Compute $Z_{\mathcal{Q}(h|x)}$ from $\mathcal{Q}(h|x)$.
\Repeat
\State Attempt to  sample from $e^{-E(x,h)}/\sum_{h} e^{-E(x,h)}$ using~\eq{rejprob}
\State with instrumental distribution $\mathcal{Q}(h|x)$ and $Z_{\mathcal{Q}(h|x)}\kappa_A$.
\Until{sample is accepted}
\Repeat
\State Attempt to  sample from $P/Z$  using~\eq{rejprob} with instrumental distribution $Q$ and $Z_Q\kappa_A$.
\Until{sample is accepted}
\EndFor
\State Compute gradients using expectation values of accepted samples and~\eq{grad}.
\State Update weights and biases using a gradient step with learning rate $r$.
\Until{Converged or maximum epochs reached.}
\end{algorithmic}
\rule{\linewidth}{1pt}
\caption{\label{alg:RS}Rejection Sampling Training Algorithm ${\rm IRS}(w,b,d,\{x_{\rm train}\})$.}
\end{algorithm}

\subsection{Training algorithm}
Our training algorithm for Boltzmann machines is given in~\alg{RS}.  The algorithm assumes that the user has (1) a method $Q(v,h)$ that approximates the model distribution and (2) a family of distributions $\mathcal{Q}(h;v)$ that estimates the data distribution when the visible units are clamped to data vector $v$.  We assume in both cases that the resulting distributions are product distributions.  Note that $P(h|v)$ is expected to be approximately unimodal for trained models~\cite{WH02} and hence it is reasonable to expect that it will provide a good approximation to the true probability.  We assume the that drawing a sample from $Q(v,h)$ or $\mathcal{Q}(h;v)$ costs one operation.  Arithmetic operations such as addition, multiplication and exponentiation are each assumed to cost a single operation.

\begin{theorem}
The expected number of query and arithmetic operations required to train a $\ell$--layer connected DBM containing $\mathcal{E}$ edges using rejection sampling for $N_{\rm epoch}$ epochs is $O\left({N_{\rm epoch} N_{\rm train} \kappa_A\mathcal{E} }\right)$,
if the assumptions of~\thm{kappa} hold with $(1-\epsilon) \in \Omega(1)$ and the mean--field approximation is used to estimate the partition function.
\end{theorem}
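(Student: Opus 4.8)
The plan is to decompose the total runtime along the nested loop structure of \alg{RS} and bound each piece separately. The outer loop runs $N_{\rm epoch}$ times; within a single epoch there is a one-time setup (computing the model distribution $Q$ and its partition-function estimate $Z_Q$), an inner loop over the $N_{\rm train}$ training vectors, and a concluding gradient-and-update step. I would first argue that the per-epoch setup costs $O(\mathcal{E})$ arithmetic operations: because $Q$ is a product distribution, the mean-field estimate $Z_{\rm MF} = -\langle E\rangle - H[Q]$ from \eq{variationalbd} decomposes into a sum of single-unit entropies and an expected energy that is a sum over the $\mathcal{E}$ edges, each evaluable in $O(1)$. Computing the mean-field parameters by fixed-point iteration costs $O(\mathcal{E})$ per sweep, and I would absorb the number of sweeps into a constant (or note it is subdominant to the sampling cost). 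The identical bound covers computing $\mathcal{Q}(h|x)$ and $Z_{\mathcal{Q}(h|x)}$ once per training vector.

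Next I would bound the cost of a single rejection-sampling trial by $O(\mathcal{E})$. Drawing a sample from $Q$ or $\mathcal{Q}(h|x)$ is one query operation by assumption. Evaluating the acceptance probability \eq{rejprob} requires $P(x)=e^{-E(x)}$, whose energy \eq{Energy} is a sum over the $\mathcal{E}$ edges plus bias terms and so costs $O(\mathcal{E})$, together with the product value $Q(x)$, which costs $O(n)=O(\mathcal{E})$ since a connected DBM on $n$ units has $\mathcal{E}\ge n-1$. Hence each trial is $O(\mathcal{E})$, and this applies verbatim to both the conditional rejection loop (sampling the hidden units with $x$ clamped) and the joint rejection loop (sampling from $P/Z$).

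The key step is to bound the expected number of trials in each rejection loop by $O(\kappa_A)$, and this is where I expect the main obstacle to lie. I would invoke \thm{kappa}, which guarantees a per-trial acceptance probability of at least $\frac{Z(1-\epsilon)}{Z_Q\kappa_A}$. The variational bound \eq{variationalbd} gives $\log Z \ge \log Z_Q$, hence $Z_Q\le Z$ and $Z/Z_Q\ge 1$; combined with the hypothesis $(1-\epsilon)\in\Omega(1)$ this makes the acceptance probability $\Omega(1/\kappa_A)$. Since the number of trials before the first acceptance is geometrically distributed, its expectation is $O(\kappa_A)$. The delicacy is that the bound genuinely relies on the mean-field inequality $Z_Q\le Z$ and the $(1-\epsilon)\in\Omega(1)$ promise: a careless treatment of $Z_Q$ versus $Z$, or a partition-function estimator violating $Z_Q\le Z$, would break the $\Omega(1/\kappa_A)$ acceptance rate. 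The same reasoning applies to the conditional loop with its own conditional partition function $\sum_h e^{-E(x,h)}$ in place of $Z$.

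Finally I would assemble the pieces by linearity of expectation. Each of the two rejection loops per training vector costs (expected trials) $\times$ (cost per trial) $= O(\kappa_A)\cdot O(\mathcal{E}) = O(\kappa_A\mathcal{E})$, so the inner loop over $N_{\rm train}$ vectors costs $O(N_{\rm train}\kappa_A\mathcal{E})$ in expectation. The gradient evaluation from \eq{grad} aggregates the accepted samples over the $\mathcal{E}$ edges at cost $O(N_{\rm train}\mathcal{E})$, and the weight update is $O(\mathcal{E})$; both are subdominant since $\kappa_A\ge 1$. Adding the $O(\mathcal{E})$ setup, the per-epoch expected cost is $O(N_{\rm train}\kappa_A\mathcal{E})$, and multiplying by $N_{\rm epoch}$ epochs yields the claimed $O(N_{\rm epoch}N_{\rm train}\kappa_A\mathcal{E})$.
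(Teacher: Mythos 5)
Your proposal is correct and takes exactly the approach the paper describes for this theorem, namely a direct count of the expected number of steps in Algorithm~1: an $O(\mathcal{E})$ cost per rejection trial, an $O(\kappa_A)$ expected number of trials per accepted sample via the geometric distribution, and multiplication over the two rejection loops, the $N_{\rm train}$ training vectors, and the $N_{\rm epoch}$ epochs. (The paper states that this proof is given in the appendix, but it is in fact absent from the source; your writeup supplies the missing details, and in particular correctly isolates the one non-trivial point --- that the variational lower bound $Z_{\rm MF}\le Z$ is what guarantees the acceptance probability from Theorem~1 is $\Omega(1/\kappa_A)$, which is precisely why the theorem's hypothesis singles out the mean--field estimate of the partition function.)
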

The proof of the theorem is a straightforward exercise in counting the number of expected steps in~\alg{RS} and is given for completeness in the appendix.

In contrast, the cost of using greedy contrastive divergence training with CD-$k$ is $O(N_{\rm epoch} N_{\rm train} k \mathcal{E} \ell)$~\cite{Ben09}.  Thus IRS training will have an advantage over contrastive divergence for training sufficiently deep networks if $\kappa_A\in \Theta(k)$.   It is worth noting that MF-CD training~\cite{WH02} also offers advantages for training deep networks but can be less accurate than CD training~\cite{TH09}.

A further advantage of IRS over CD training is that the rejection sampling step can be parallelized.  Conversely, the $k$ sampling steps used in CD-$k$ cannot be parallelized.  Parallelism can boost the accuracy of IRS training without increasing the runtime of the algorithm, as discussed in~\tab{runtimes}.

\subsection{Accuracy of gradients}
As with contrastive divergence training, the accuracy of the gradients yielded by IRS training is controllable.  The two parameters that affect the quality of the gradients are $\kappa_A$ and $N_{\rm samp}$.  As both of these quantities approach infinity, the error in the gradient goes to zero.  We formalize this in the following corollary, which is proven in the appendix.
\begin{corollary}\label{cor:gradbd}
The expected Euclidean norm of the difference between the gradient computed by $N_{\rm samp}$ samples using rejection sampling with instrumental distribution $Q$ using $\kappa_A$ and $Z\approx Z_Q$ for a connected binary Boltzmann machine with $\mathcal{E}$ edges computed using $P/Z$ is
$$
O\left(\sqrt{\mathcal{E}} \left(\frac{1}{\sqrt{N_{\rm samp}}}+\frac{Z D_2(P/Z \, || \, Q)}{\kappa_A Z_Q} \right) \right).
$$
\end{corollary}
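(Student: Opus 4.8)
The plan is to decompose the expected error of the estimated gradient into a statistical (finite-sample) part and a systematic (bias) part, bound each separately, and recombine. Write $g$ for the exact gradient, whose components are the expectation values appearing in \eq{grad} — each an average of a product of at most two binary units, hence a $\{0,1\}$-valued observable, minus the exact regularization term — and write $\hat g$ for the estimate that \alg{RS} produces from $N_{\rm samp}$ samples. Since a connected graph on $n_v+n_h$ units with $\mathcal{E}$ edges has $n_v+n_h = O(\mathcal{E})$, the gradient vector has $\Theta(\mathcal{E})$ components. I would start from the bias--variance identity $\mathbb{E}\|\hat g - g\|^2 = \mathbb{E}\|\hat g - \mathbb{E}\hat g\|^2 + \|\mathbb{E}\hat g - g\|^2$: the first term captures finite-sample fluctuations, and the second captures the fact that \thm{kappa} samples from some $\tilde P$ rather than from $P/Z$.

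For the variance term, each component of $\hat g$ is an empirical mean of $N_{\rm samp}$ i.i.d.\ bounded observables drawn from $\tilde P$, so its variance is $O(1/N_{\rm samp})$; summing over the $\Theta(\mathcal{E})$ components gives $\mathbb{E}\|\hat g - \mathbb{E}\hat g\|^2 = O(\mathcal{E}/N_{\rm samp})$, i.e.\ a root-mean-square contribution of $O(\sqrt{\mathcal{E}}/\sqrt{N_{\rm samp}})$.

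The bias term is the crux. Each component of $\mathbb{E}\hat g - g$ equals $\mathbb{E}_{\tilde P}[f]-\mathbb{E}_{P/Z}[f]$ for some $f$ with $|f|\le 1$, so I need a per-component bound of order $\epsilon := \sum_{x\in{\rm bad}}(P(x)-\kappa_A Z_Q Q(x))/Z$. The naive route — converting the fidelity bound $\sum_x\sqrt{\tilde P(x)P(x)}\ge\sqrt Z(1-\epsilon)$ of \thm{kappa} into a total-variation bound through the Hellinger distance — only yields $O(\sqrt\epsilon)$ and is too weak. The hard part is instead to exploit the explicit form of the approximate rejection-sampling output: $\tilde P(x)\propto P(x)$ on the good set and $\tilde P(x)\propto Q(x)$ on the bad set. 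A short computation then shows that on the good set $\tilde P(x)=c\,P(x)/Z$ for a single constant with $c-1=\frac{\epsilon}{1-\epsilon}$, while the $P/Z$-mass and the $\tilde P$-mass of the bad set are each $O(\epsilon)$ — the former being exactly the quantity that Markov's inequality controls in the proof of \lem{alpha}. Splitting the sum over good and bad configurations bounds each component's bias by $O(\epsilon)$, so $\|\mathbb{E}\hat g - g\| = O(\sqrt{\mathcal{E}}\,\epsilon)$.

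Finally I would invoke \lem{alpha} to replace $\epsilon$ by $O(Z D_2(P/Z\,||\,Q)/(\kappa_A Z_Q))$, and combine the two bounds through $\sqrt{a^2+b^2}\le a+b$ together with Jensen's inequality $\mathbb{E}\|\hat g - g\|\le\sqrt{\mathbb{E}\|\hat g-g\|^2}$, giving the stated $O(\sqrt{\mathcal{E}}(N_{\rm samp}^{-1/2}+Z D_2/(\kappa_A Z_Q)))$. The same argument applies verbatim to the data expectations (sampled with $\mathcal{Q}(h|x)$) and to the bias-direction components, since all are bounded observables, so the two sampling stages change only constants.
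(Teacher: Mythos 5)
Your proposal is correct and follows essentially the same route as the paper's proof: split the error into a statistical part and a systematic part coming from sampling $\tilde P$ instead of $P/Z$, bound each component by a quantity independent of $\mathcal{E}$, and pick up the $\sqrt{\mathcal{E}}$ factor from $\|\cdot\|_2\le\sqrt{\mathcal{E}}\,\|\cdot\|_{\max}$ applied to the $O(\mathcal{E})$ components; your bias--variance-plus-Jensen packaging versus the paper's componentwise triangle inequality is a cosmetic difference. The one substantive point in your favor is the treatment of the systematic term: the paper simply writes $\bigl|\sum x_i x_j(\tilde P(x)-P(x)/Z)\bigr|\le\bigl|\sum(\tilde P(x)-P(x)/Z)\bigr|$ and asserts the $O\bigl(Z D_2(P/Z\,||\,Q)/(\kappa_A Z_Q)\bigr)$ bound, leaving implicit why the $\ell_1$ distance is of order $\epsilon$ rather than the $O(\sqrt{\epsilon})$ one would get from the fidelity guarantee of \thm{kappa}; you correctly observe that the Hellinger route is too lossy and that one must instead use the explicit form of $\tilde P$ from the proof of \thm{kappa} ($\tilde P\propto P$ on the good set, $\tilde P\propto \kappa_A Z_Q Q$ on the bad set), which is exactly what makes the argument close. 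One small imprecision: the $P/Z$-mass of the bad set is not $O(\epsilon)$ if $\epsilon$ is defined as the deficit $\sum_{x\in{\rm bad}}(P(x)-\kappa_A Z_Q Q(x))/Z$, since the deficit can be much smaller than the mass; but because the proof of \lem{alpha} in fact bounds $\sum_{x\in{\rm bad}}P(x)/Z$ itself via Markov's inequality, the quantity you ultimately substitute is the right one and your final bound is unaffected.
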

The proof of~\cor{gradbd} follows directly from~\lem{alpha}, standard error bounds on statistical sampling and norm inequalities.  Note that the choice $N_{\rm samp} = N_{\rm train}$ in~\alg{RS} is not necessary.

\fig{graderror} confirms the expectations of~\cor{gradbd} by showing that the error in the estimated gradient shrinks as $O(1/\sqrt{N_{\rm samp}})$ if $\kappa_A$ is sufficiently large.  Furthermore, because there are twice as many weights in the RBM with $n_v=6$ and $n_h=4$ as there are in the RBM with $n_v=4$ and $n_h=3$, the ratio of the two errors should be a factor of $\sqrt{2}$.  The numerical experiments on these small RBMs agree with this assumption, suggesting that the scaling with $\mathcal{E}$ also agrees with~\cor{gradbd}.  Also, $\kappa_A=10$ proves to be sufficient for the majority of the data sets considered, while $\kappa_A=4$ appears to be the threshold below which the quality of the gradient is negatively impacted by $\kappa_A$.

We examine the value of $D_2$ obtained using $Q_{\rm MF}$ and $Q_{\alpha=2}$ case in~\tab{kappa} for a random ensemble of RBMs in order to assess the benefits of IRS training using $Q_{\alpha=2}$.  We choose a large weight distribution for the data to emphasize the discrepancies between the two.  For more modest weight distributions, the two quantities become comparable (see appendix). We find that using the mean--field approximation instead of the optimal distribution increases the expected $\kappa$ predicted by~\cor{gradbd} by up to $7$ orders of magnitude for a $22$ unit RBM.  In particular, the values of $\kappa_A$ needed are sufficiently low such that the majority of these distributions can be exactly prepared from $Q_{\alpha=2}$ using a reasonable number of samples.  

\begin{figure}[t!]
\centering
\includegraphics[width=0.45\columnwidth]{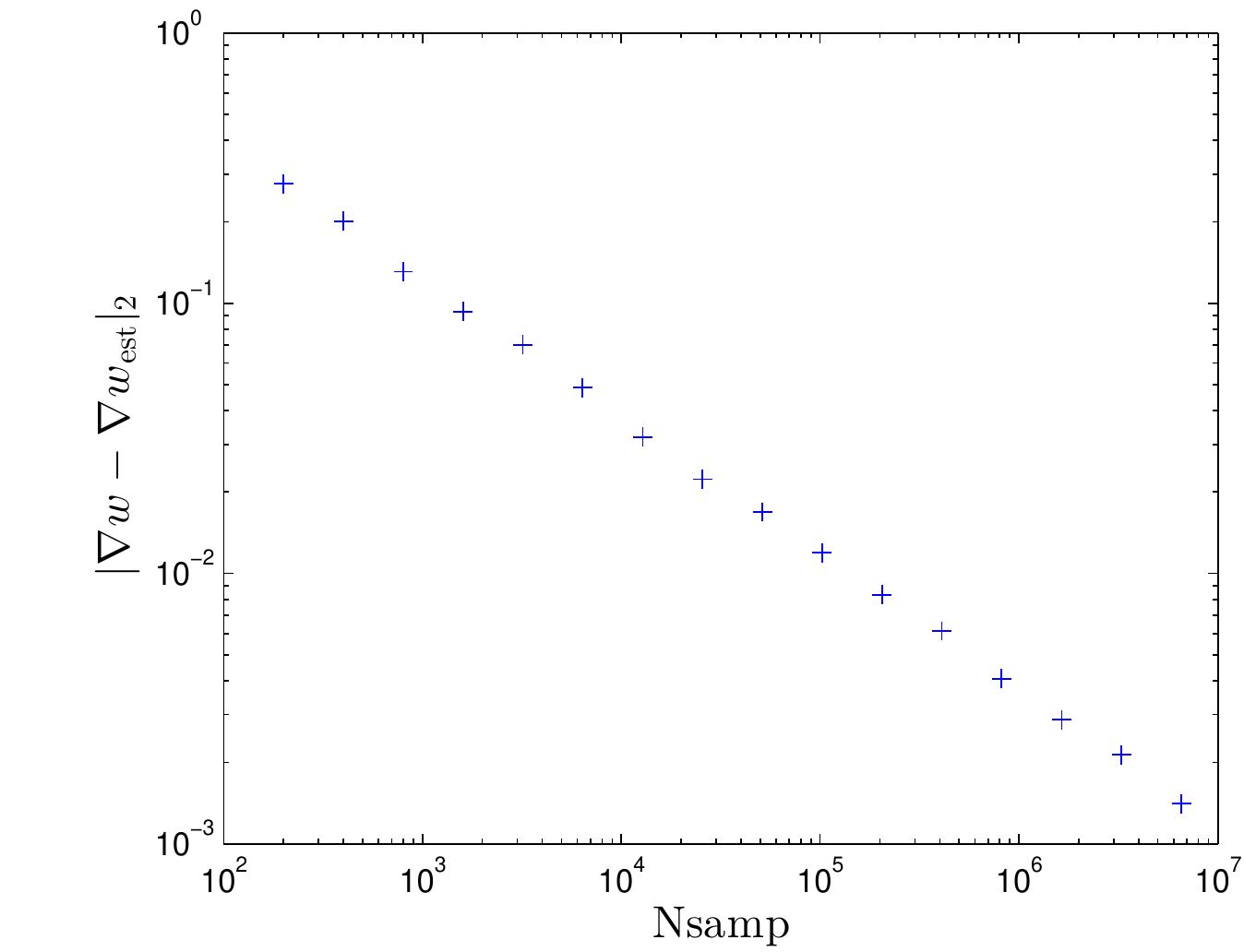}
\includegraphics[width=0.45\columnwidth]{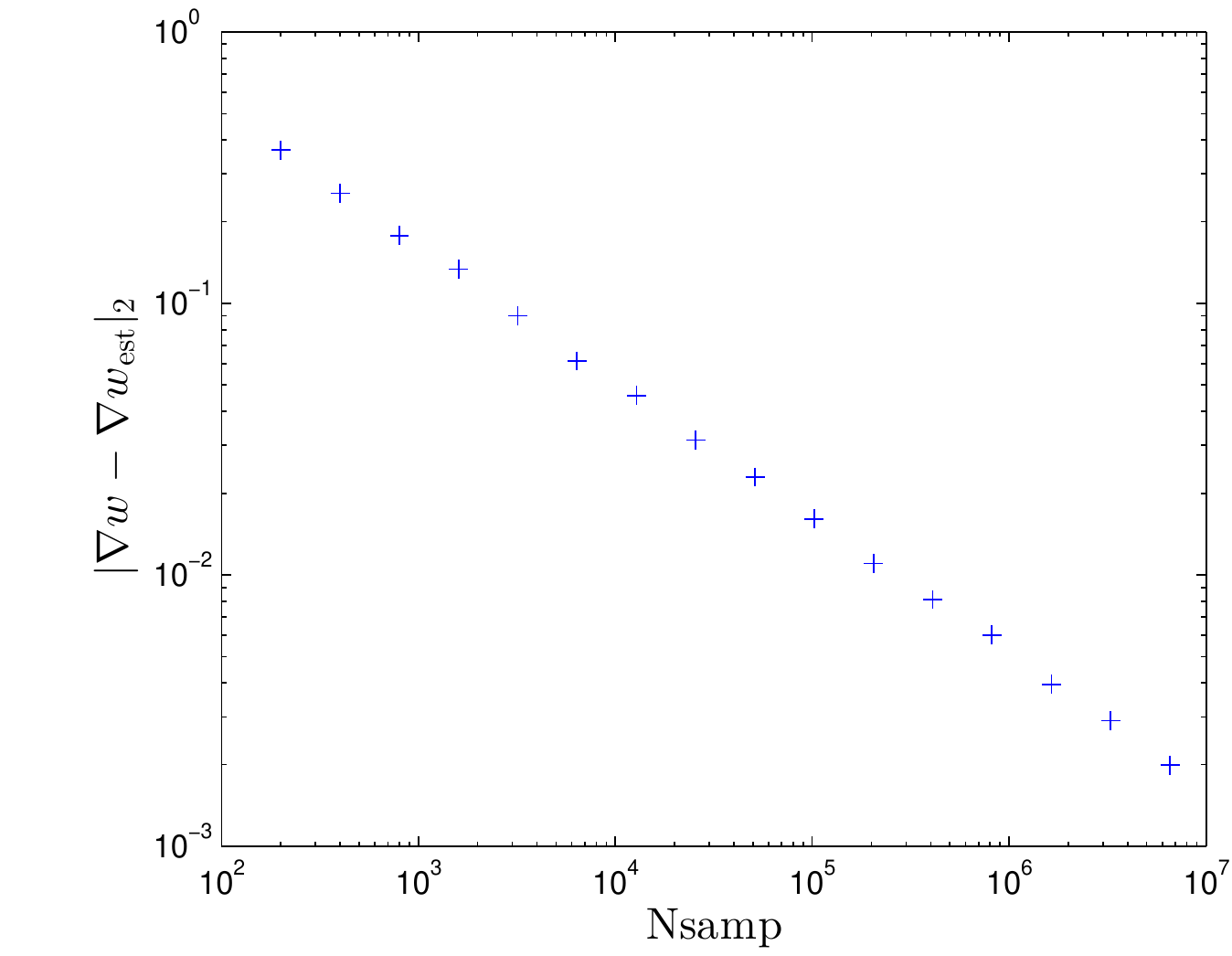}
\caption{Mean difference between the gradient of the weight vector computed by rejection sampling, $\nabla w_{\rm est}$, and the gradient computed directly from~\eq{grad} for $100$ randomly generated RBMs as a function of the number of samples considered in the rejection sampling algorithm with $\kappa=10$. Weights and biases are taken to be $\mathcal{N}(0,1)$. (Left) $n_v=4$, $n_h=3$ and (right) $n_v=6$, $n_h=4$. \label{fig:graderror}}
\end{figure}

\begin{table}[t!]
\centering
\begin{tabular}{|c||c|c|}
\hline
$n_h$ &$\log_{10}\left(D_2(P/Z\, ||\, Q_{\rm MF})\right)$&$\log_{10}\left(D_2(P/Z\, ||\, Q_{\alpha=2})\right)$\\
\hline
$4$ & $0.39\pm 0.88$ & $-0.44 \pm 0.33$\\
$8$ & $1.7\pm 1.1$& $ -0.06 \pm 0.33$\\
$12$ & $3.1\pm 1.6$ & $0.21 \pm 0.36$\\
$16$ & $4.6\pm 1.9$ &$0.34 \pm 0.41$\\
\hline
\end{tabular}
\caption{$\alpha$--divergences for synthetic RBMs with $n_v=6$.  Biases and weights are normal with zero mean and unit variance. The $\alpha=2$ data scales linearly; whereas mean--field scales exponentially.  \label{tab:kappa}}
\end{table}

\subsection{Accuracy of training}

We will now compare the performance of IRS training algorithm to CD-1 training for small restricted Boltzmann machines.   We use the following synthetic training data:
\begin{eqnarray}
{[x_1]_j} &=& 1 \mbox{ if } j \le n_v/2, \mbox{ else } 0 \nonumber\\
{[x_2]_j} & = & j \mbox{ mod } 2\label{eq:4datavectors},
\end{eqnarray}
and their bitwise complements.  We further add Bernoulli noise of strength $\mathcal{N}=0.1$ to each of the bits in the training vectors to make the data set more challenging to learn.

We assess the quality of the training methods by using~\alg{RS} or CD-1 to find an approximation to the optimal model parameters and then compute the exact gradients of $O_{\rm ML}$) to find the location of the true optima that these methods estimate.   The data is presented in~\fig{CDML} for a small RBM with $n_v=6$ and $n_h=4$.  The data shows that while contrastive divergence finds an optima that is on average within $0.13\%$ of that exact ML training can provide, IRS training deviates by $0.0015\%$ and continues to converge to the ML optima as $N_{\rm epochs}$ increases.  Further numerical evidence is provided in the appendix. Such differences are expected to be even more striking for deep restricted Boltzmann machines because IRS does not greedily optimize the weights~\cite{wiebe_quantum_2014}.

\section{Outlook}
Our results open a number of further avenues for further inquiry.
Although IRS has asymptotic advantages over existing methods for training DBMs in the presence of sufficiently strong regularization, our work only shows that it can provide accurate and efficient approximations to the gradients of the training objective under such circumstances.  Further work will be needed to examine the performance of IRS training in practical machine learning problems.

Since IRS achieves its goals by combining results from the disparate fields of variational approximations and deep learning, it is natural to suspect that it could be optimized by going
beyond the simple unimodal approximations used in the main body.  Indeed, we see in the appendix that the use of bimodal approximations can substantially reduce
the sample complexity of training.  Further study may reveal even more practical variational approximations to $P(x)/Z$.

Our work also illustrates that quantum machine learning may be an important avenue of inquiry for understanding machine learning in a broader context.
This utility of thinking from a quantum perspective is not likely to be unique to training Boltzmann machines since classical computing is in a subset of quantum computing and hence every result shown for classical machine learning also applies to a subset of quantum machine learning.  
Conversely, lower bounds and no-go theorems proven for the quantum setting also apply to the classical setting, which makes quantum computing ideally suited for understanding the 
limitations and opportunities that physics places on a machine's ability to learn.
Just as quantum insights inspired the present work, re-examining other areas of machine learning through the lens of quantum computing may not only lead to new 
classical algorithms but also provide deep insights into the nature of learning and inference.  

\begin{figure}[t!]
\centering
\includegraphics[width=0.45\columnwidth]{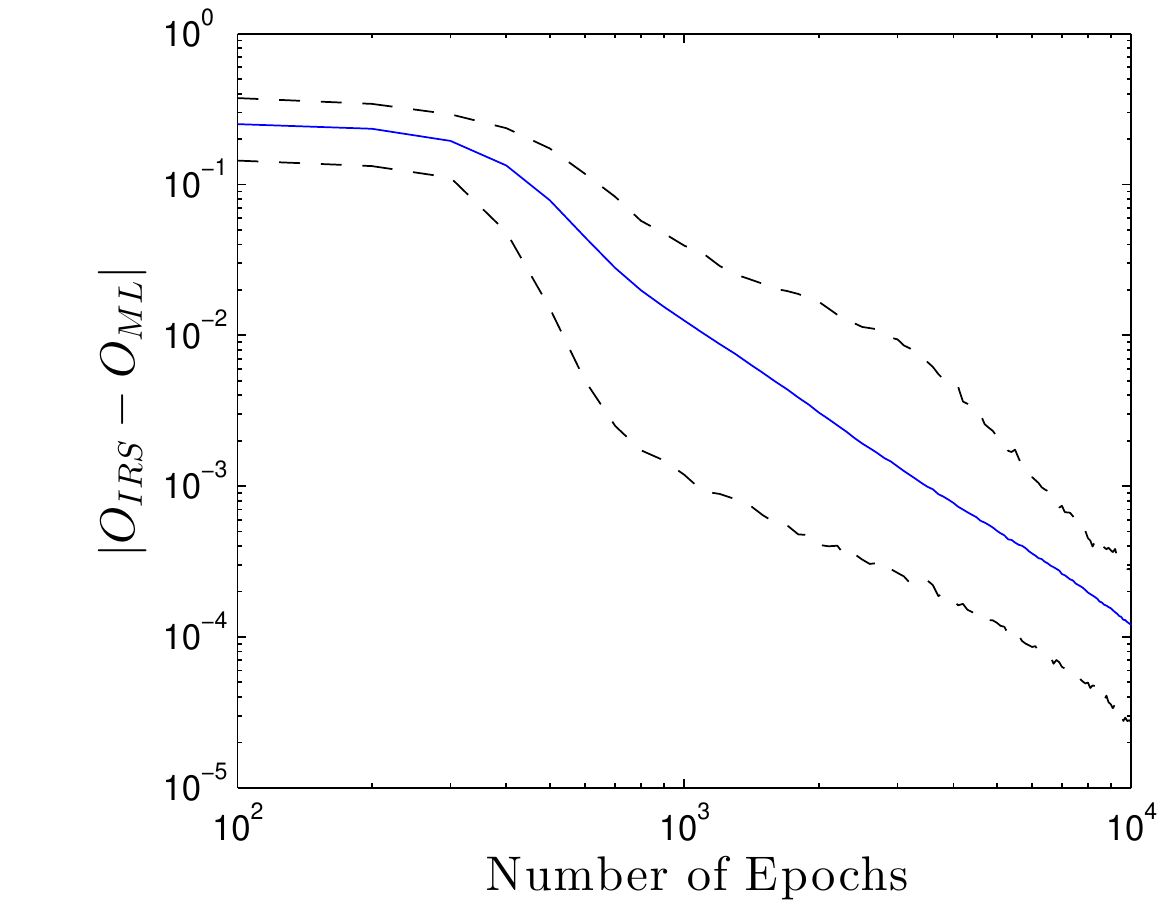}
\includegraphics[width=0.45\columnwidth]{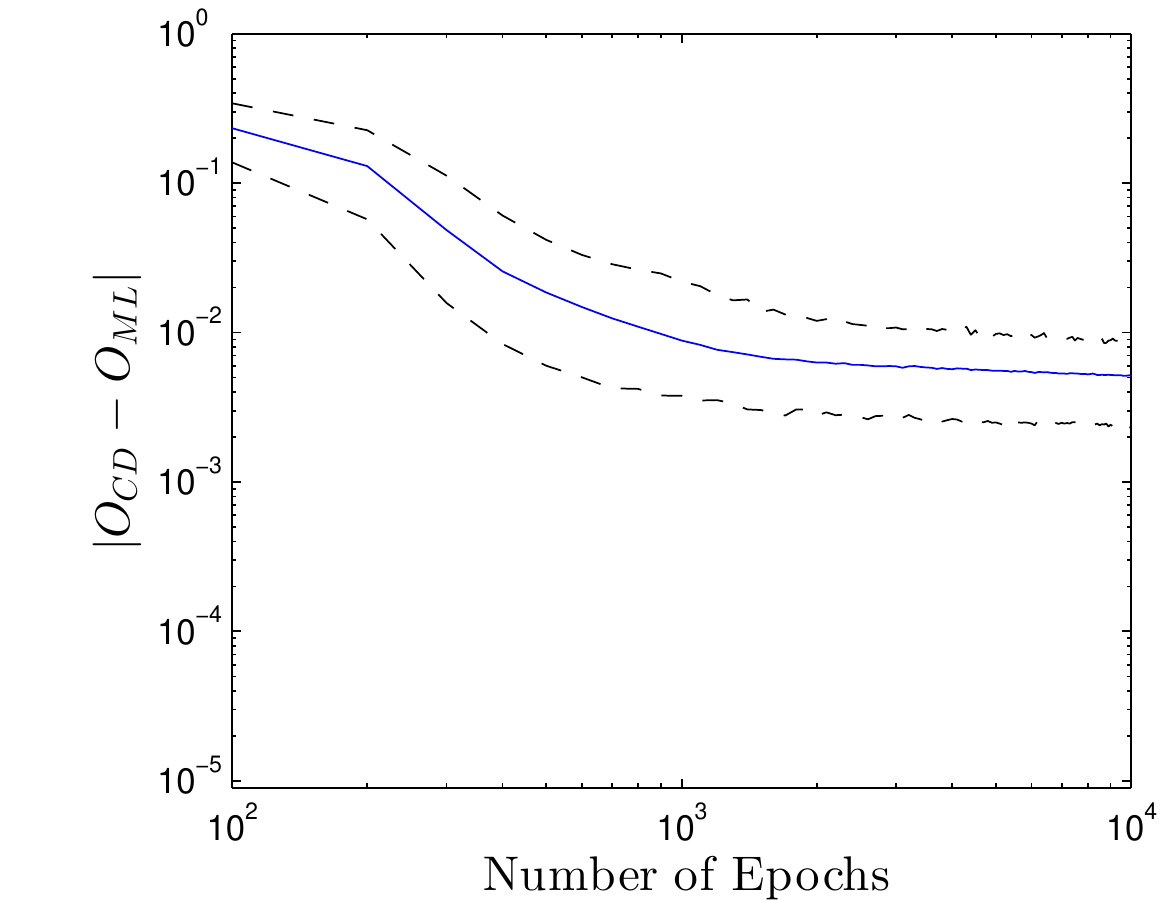}
\caption{Difference between the objective functions computed using IRS gradients (left), CD-1 gradients (right) and the objective function evaluated at the optima found by ML training.  Dashed lines denote a $95\%$ confidence interval; solid lines denote the mean.  For IRS, $\kappa_A=800$, $N_{\rm samp} = N_{\rm train}=100$, $\mathcal{N}=0.1$ and $Q$ is an equal mixture of the mean--field and uniform distributions,  $\lambda=0.05$ and the learning rate varies from $0.1$ to $0.001$ at $10000$ epochs.   $\mathbb{E}(O_{\rm ML})\approx -3.8724$.\label{fig:CDML}}
\end{figure}

\subsubsection*{Acknowledgments}
We thank Tom Minka for valuable discussions and for the code for computing $\alpha$--divergences.
\appendix

\section{Review of Dirac Notation}

We will derive much of the theory behind our method using language from quantum computing.  This language not only
proves to be useful for representing rejection sampling based algorithms, but also is useful because it provides
a clear method for dequantizing the method of~\cite{wiebe_quantum_2014}.
The central object in quantum computing is the quantum state (which we will take to mean a \emph{pure state} in the following).  
A quantum state is simply a unit vector in $\mathbb{C}^N$ such that the magnitude squared of each of its components yields
the probability of the corresponding outcome.

A quantum state is typically represented (using Dirac notation) as $\ket{\psi}$ which can be interpreted to be a column vector in $\mathbb{C}^N$.  Similarly, $\bra{\psi}$ is its Hermitian transpose.
The notation is linear, which means that if $\{\ket{j}\}$ is a complete orthonormal vector space on $\mathbb{C}^N$ then the state $\ket{\psi}$ can
be written as
\begin{equation}
\ket{\psi} = \sum_{j=1}^N \sqrt{P_j} \ket{j}.\label{eq:superposition}
\end{equation}
Physically, such states describe the probability distribution of outcomes that emerges when the state $\ket{\psi}$ is measured in this basis where each $P_j$ gives the probability of finding the system in basis state $\ket{j}$.
Classically, this measurement process is equivalent to sampling from a probability distribution but in quantum mechanics this is more subtle
because the resultant probability distribution changes depending on the basis that the system is measured in; whereas in classical applications there is implicitly only one such basis.
The fact that Dirac notation is basis independent gives it a distinct advantage for concisely describing distributions relative to column--vector notation.

Dirac notation also has an implicit tensor product structure built in, meaning that
\begin{equation}
\ket{\psi}\ket{\phi} := \ket{\psi}\otimes \ket{\phi}.\label{eq:tensor}
\end{equation}
This convention is useful for describing large sets of uncorrelated variables because the tensor product structure implicitly captures the lack of correlations between the variables
described by $\ket{\psi}$ and those described by $\ket{\phi}$.  Correlated distributions can be described by combining~\eq{superposition} and~\eq{tensor} via
\begin{equation}
\sum_{i,j} a_{i,j} \ket{u_i}\ket{v_j},
\end{equation}
where $\ket{u_i}$ and $\ket{v_j}$ represent orthonormal basis vectors that span the space that the variables described by $\ket{\psi}$ and $\ket{\phi}$ are supported in.
Also, for the probability of drawing a particular sample $v_j$ from the marginal distribution over $v_j$ is, for example, $\Pr(v_j)= \sum_i |a_{i,j}|^2$ and the resultant marginal state over the $u_i$ is
\begin{equation}
\sum_{i} \frac{a_{i,j}}{\sqrt{\sum_i |a_{i,j}|^2}} \ket{u_i}.  
\end{equation}

\section{Proofs}
\subsection{De--quantization of Quantum Rejection Sampling and Proof of Theorem 1}
While the states in quantum computing are the fundamental objects of the computational model, quantum computers also need a complete set of operations that are capable of performing an arbitrary (unitary) transformation on input states.  This means that a quantum computer has to have the ability to transform an arbitrary input unit vector into any other such vector.  For the present purposes, however, it suffices to note that probability distributions are first class entities in quantum computing and that quantum computers provide a method for preparing any such distribution.  As a consequence, it should come as no surprise that notation developed to describing quantum devices should also be useful for describing classical sampling algorithms.

Quantum rejection sampling is one of the most important tools used to design quantum algorithms not only because it can be used to enable exponential speedups for certain tasks, but also because it is very natural in quantum computing~\cite{ORR13}.  Let us assume that we have a (potentially un--normalized) distribution $P$ that we cannot directly prepare and also have an efficiently preparable distribution $Q$.  Furthermore, let us also assume that a constant is known such that $P/Q \le \kappa$.  We then prepare the state
\begin{equation}
\sum_{x} \sqrt{Q(x)}\ket{x} \left(\sqrt{\frac{P(x)}{Q(x) \kappa}}\ket{1} + \sqrt{1-\frac{P(x)}{Q(x) \kappa}}\ket{0} \right).\label{eq:qrej}
\end{equation}
If we measure the right--most register, which we will refer to as \emph{the coin}, to be one then the resultant state over the \emph{sample register} is
\begin{equation}
\sum_{x} \sqrt{\frac{P(x)}{\sum_x P(x)}}\ket{x}:=\sum_{x} \sqrt{\frac{P(x)}{Z}}\ket{x},
\end{equation}
and thus if we consider the state post--selected on successfully measuring the right most register in~\eq{qrej} to be $0$ then we prepare the desired distribution $P(x)$ over the remaining register.  The probability of success for this is $\sum_x \frac{P(x)}{\kappa}$.

Although quantum notation is used in this procedure, there is nothing inherently quantum about it as written.  In fact, the exponential advantages that quantum algorithms accrued through quantum rejection sampling arise only because the distributions $P$ or $Q$ cannot be efficiently computed or because the distribution $Q$ cannot be efficiently sampled from. This is summarized in the following lemma.
\begin{lemma}
Let $Q(x):x\in \mathbb{Z}^+_{2^n}$ be an efficiently computable probability distribution that can be efficiently sampled from.  Assume that $P(x):x\in \mathbb{Z}^+_{2^n}$ can be efficiently computed and $P(x)/Q(x) \le \kappa~\forall~x\in\mathbb{Z}^+_{2^n}$ then the task of drawing samples from the distribution yielded by quantum rejection sampling can be efficiently simulated by a classical computer.\label{lem:qrej}
\end{lemma}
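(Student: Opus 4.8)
The plan is to exhibit an explicit classical algorithm --- ordinary rejection sampling --- and to show that the distribution over its accepted samples coincides exactly with the distribution obtained by preparing the state in~\eq{qrej} and post--selecting on the coin register, with every step running efficiently under the stated hypotheses. The conceptual content of the lemma is that, in the regime where $P$ and $Q$ are efficiently computable and $Q$ is efficiently sampleable, the state~\eq{qrej} carries no genuinely quantum resource: its computational--basis amplitudes are nonnegative real numbers that a classical computer can evaluate, so measuring it is nothing more than sampling from a classically specified distribution.

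First I would pin down precisely what ``the distribution yielded by quantum rejection sampling'' is. Measuring both registers of~\eq{qrej} in the computational basis returns the outcome $(x,1)$ with probability $Q(x)\cdot \frac{P(x)}{Q(x)\kappa} = \frac{P(x)}{\kappa}$ and the outcome $(x,0)$ with probability $Q(x)\left(1-\frac{P(x)}{Q(x)\kappa}\right)$; crucially there is no interference between distinct values of $x$, since the computational basis diagonalizes the measurement. Post--selecting on the coin being $1$ therefore yields $x$ with probability $\frac{P(x)/\kappa}{\sum_{x'} P(x')/\kappa} = P(x)/Z$, which is exactly the target distribution.

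Next I would define the classical simulator and match its statistics to the above. The hypothesis $P(x)/Q(x)\le\kappa$ guarantees that $a(x):=P(x)/(Q(x)\kappa)\in[0,1]$, so it is a valid acceptance probability. The algorithm draws $x\sim Q$, draws $y$ uniformly from $[0,1]$, and accepts $x$ precisely when $y\le a(x)$, repeating until a sample is accepted. The joint probability of drawing $x$ and accepting it is $Q(x)\,a(x)=P(x)/\kappa$, identical to the quantum probability of the outcome $(x,1)$; hence conditioning on acceptance reproduces $P(x)/Z$ term by term, and the per--attempt acceptance probability $\sum_x P(x)/\kappa = Z/\kappa$ matches the quantum success probability $\sum_x P(x)/\kappa$ of the excerpt. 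Efficiency then follows immediately: sampling $x$ from $Q$ is efficient by hypothesis, evaluating $P(x)$ and $Q(x)$ and forming $a(x)$ uses a constant number of arithmetic operations, and comparing against a uniform deviate is a single operation, so each attempt is efficient and produces accepted samples with exactly the same cost and success probability as the quantum routine.

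I expect the only real subtlety --- a point to state carefully rather than a genuine obstacle --- to be the justification that no quantum advantage survives under these hypotheses. The point to emphasize is that the exponential speedups associated with quantum rejection sampling arise solely from the ability to prepare~\eq{qrej} and sample it when $P$ or $Q$ is \emph{not} efficiently classically computable, or when $Q$ is \emph{not} efficiently classically sampleable; once both resources are supplied classically, preparing and measuring the state offers nothing beyond the classical rejection loop, so the simulation is faithful. This lemma then serves as the exact (${\rm bad}=\emptyset$, $\kappa_A=\kappa$) special case from which the approximate statement of~\thm{kappa} is obtained by relaxing $\kappa_A<\kappa$ and accounting for the forced acceptances on the bad set.
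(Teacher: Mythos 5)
Your proposal is correct and follows essentially the same route as the paper: both arguments identify the outcome distribution of measuring the state in~\eq{qrej} with the joint draw--and--accept distribution of classical rejection sampling, and then observe that every step is efficient under the stated hypotheses. The only cosmetic difference is that the paper justifies the equivalence by reordering the sample and coin measurements via commutativity of the partial trace, whereas you compute the joint computational--basis probabilities directly; the content is the same.
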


\begin{proof}
Let us assume that we are provided with a state as per~\eq{qrej}.  Drawing a sample from the correct distribution corresponds to measuring the coin register in~\eq{qrej} and conditioned on measuring $1$ the sample register is measured and the result is output as the sample from $P(x)$.  Because the partial trace is a commutative operation, the order of these two measurements is arbitrary.  Instead, we could first measure the sample register resulting and if the result is $x$ then the marginal distribution over the coin is $\sqrt{\frac{P(x)}{Q(x) \kappa}}\ket{1} + \sqrt{1-\frac{P(x)}{Q(x) \kappa}}\ket{0}$.  Now the coin register can be measured and the sample $x$ will be accepted if and only if the result is $1$, which occurs with probability $P(x)/(Q(x)\kappa)$.  This process is equivalent to the original quantum algorithm.

By assumption the distribution $Q(x)$ can be sampled from efficiently be a classical computer.  This means that the first step in the re--ordered quantum algorithm can be efficiently simulated.  Next, using the fact that $P(x)/(Q(x)\kappa) \le 1$ and that $P(x)$ and $Q(x)$ are efficiently computable it follows that we can efficiently simulate drawing a sample from the coin register by sampling from a Bernoulli distribution with $p= \frac{P(x)}{Q(x) \kappa}$.  This process, also known as rejection sampling, is efficient and hence quantum rejection sampling can be efficiently simulated using a classical computer under these assumptions.
\end{proof}

This consequently shows that the GEQS algorithm for training deep networks given in~\cite{wiebe_quantum_2014} can be efficiently simulated and has a direct classical analog in the IRS algorithm.  Similarly, the GEQAE algorithm in~\cite{wiebe_quantum_2014} can also be efficiently simulated but IRS is not a natural analog of it since GEQAE uses a manifestly quantum method (known as amplitude estimation) for inferring the expectation values needed to train the DBM.  This method may be of particular importance because it can lead to quadratic reductions in the number of times the database of training vectors needs to be queries, which leads to significant cost savings for typical machine learning problems.

Despite the fact that~\lem{qrej} shows that a classical computer can often simulate quantum rejection sampling efficiently quantum computers can nonetheless provide huge advantages for rejection sampling.  In fact, speedups relative to classical algorithms can arise from the use of quantum subroutines such as amplitude amplification to quadratically reduce the rejection rate in the sampling process~\cite{Poulin}.  However, since we focus on classical algorithms for rejection sampling we will ignore such quantum algorithms in the following.

\begin{proofof}{Theorem 1}
The proof here follows one given in~\cite{wiebe_quantum_2014}.
The algorithm described in~\lem{qrej} will fail as writted if $P/Q > \kappa_A$ because the marginal distribution over the coin register will no longer be normalized.  This can be addressed, at the price of introducing errors in the distribution, by clipping the probabilities used in the coin register to $[0,1]$ for the configurations $x\in {\rm bad}$ where $P/Q > Z_Q \kappa_A$.
Using Dirac notation, we wish to sample from the following state.
\begin{equation}
\sum_{x\in {\rm good}} \sqrt{Q(x)} \ket{x} \left(\sqrt{\frac{P(x)}{ Q(x)Z_Q\kappa_A}} \ket{1}  + \sqrt{1-\frac{P(x)}{ Q(x)Z_Q\kappa_A}}\ket{0}\right)+ \sum_{x\in{\rm bad}} \sqrt{Q(x)} \ket{x} \ket{1},
\end{equation}
where ${\rm good} = \mathbb{Z}_{2^n} \setminus {\rm bad}$.
The probability of measuring the coin to be $1$ is 
\begin{eqnarray}
P_{\rm accept} &=& \sum_{x\in {\rm good}} \frac{P(x)}{Z_Q\kappa_A} + \sum_{x\in{\rm bad}} Q(x) \ge \sum_{x}\frac{P(x)(1-\epsilon)}{Z_Q\kappa_A}. 
\end{eqnarray}
We denote the resultant state
\begin{equation}
\ket{\tilde{P}} = \sum_x \sqrt{\tilde{P}(x)} \ket{x} :=\frac{\sum_{x\in {\rm good}} \sqrt{P(x)} \ket{x} +  \sum_{x\in {\rm bad}} \sqrt{Z_Q\kappa_A Q(x)} \ket{x}}{\sqrt{\sum_{x}{P(x)+\sum_{x\in{\rm bad}}(Z_Q\kappa_A Q(x) - P(x))}}}.
\end{equation}
Now the fidelity of the marginal state that results from post--selected measurement of the coin register (i.e. accepting the sample from $Q$) is
\begin{eqnarray}
\braket{\tilde P}{P} &=& \frac{\sum_{x} \sqrt{\tilde{P}(x) P(x)}}{\sqrt{\sum_x P(x)}}\ge \frac{\sum_{x\in {\rm good}} {P(x)}  +  \sum_{x\in {\rm bad}} {Z_Q\kappa_A Q(x)} }{\sum_x P(x)\sqrt{1-\epsilon}}\ge 1-\epsilon.
\end{eqnarray}

This shows that there exists a quantum algorithm that has the desired success probabilities and incurs error at most $\epsilon$ in the resultant distribution.  The remainder of the proof then follows by the same logic as that used in~\lem{qrej} to show that the algorithm can be de--quantized by exchanging the order that the coin and sample registers are measured in.  Thus there is an equivalent classical algorithm under the assumptions that $P$ and $Q$ are efficiently computable and $Q$ can be efficiently sampled from.
\end{proofof}

\subsection{Proof of Corollary 1}
\begin{proofof}{Corollary 1}
Let us focus on the problem of approximating the model expectation present in the gradient with respect to $w_{i,j}$.  The triangle inequality and~Theorem 1 imply that if $\tilde{P}$ is the probability distribution that is obtained after rejection sampling is performed using $\kappa_A$ and $\{y_k: k=1,\ldots,N_{\rm samp}\}$ are the samples drawn from $\tilde{P}(x)$ in the rejection sampling protocol
\begin{eqnarray}
&&\Biggr|\frac{1}{N_{\rm samp}}\sum_{k=1}^{N_{\rm samp}} \delta_{x,y_{(k)}}x_i x_j  - \langle x_i x_j\rangle_{\rm model}\Biggr| \nonumber\\
&&\qquad\le \Biggr|\frac{1}{N_{\rm samp}}\sum_{k=1}^{N_{\rm samp}} \delta_{x,y_{(k)}}x_i x_j-\sum x_i x_j (\tilde{P}(x)) +\sum x_i x_j \tilde{P}(x) - \langle x_i x_j\rangle_{\rm model}\Biggr| \nonumber\\
&&\qquad\le \biggr|\sum x_i x_j (\tilde{P}(x) - \delta_{x\in \{y_k\}}/N_{\rm samp})\biggr| + \biggr|\sum x_i x_j (\tilde{P}(x) - P/Z) \biggr|.\nonumber\\
&&\qquad\le \biggr|\sum x_i x_j (\tilde{P}(x) - \delta_{x\in \{y_k\}}/N_{\rm samp})\biggr| + \biggr|\sum (\tilde{P}(x) - P/Z) \biggr|.\nonumber\\
&&\qquad\in O\left( \biggr|\sum x_i x_j (\tilde{P}(x) - \delta_{x\in \{y_k\}}/N_{\rm samp})\biggr| + \frac{Z D_2(P/Z \, || \, Q)}{\kappa_AZ_Q}\right).\label{eq:1termbd}
\end{eqnarray}
Because the units are binary the variance of $x_i x_j$ is at most $1$, which means that the sampling error can be bounded and hence~\eq{1termbd} is
\begin{equation}
 O\left(\frac{1}{\sqrt{N_{\rm samp}}} + \frac{Z D_2(P/Z \, || \, Q)}{\kappa_AZ_Q} \right).\label{eq:2termbd}
\end{equation}
\end{proofof}
The exact same argument can be applied to the model average and the gradients of the biases.  The conclusion is identical in each case, that the contribution to the error from sampling and using an insufficient value of $\kappa_A$ is at most~\eq{2termbd}.  For a connected graph, the maximum number of components of any of these vectors is $\mathcal{E}$.  The triangle inequality and the fact that $\|\cdot \|_{2} \le \sqrt{\mathcal{E}} \|\cdot \|_{\max}$ then gives us our result.

\section{Additional Numerics}

\begin{figure}[t!]
\includegraphics[width=0.495\columnwidth]{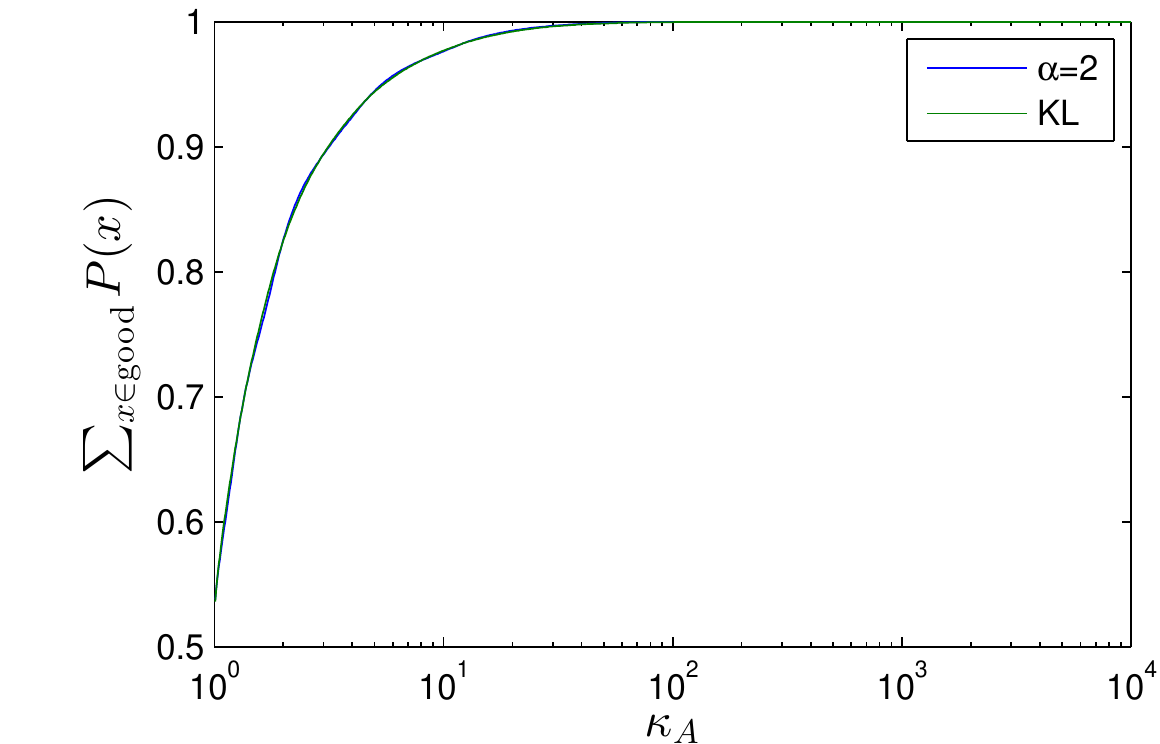}
\includegraphics[width=0.495\columnwidth]{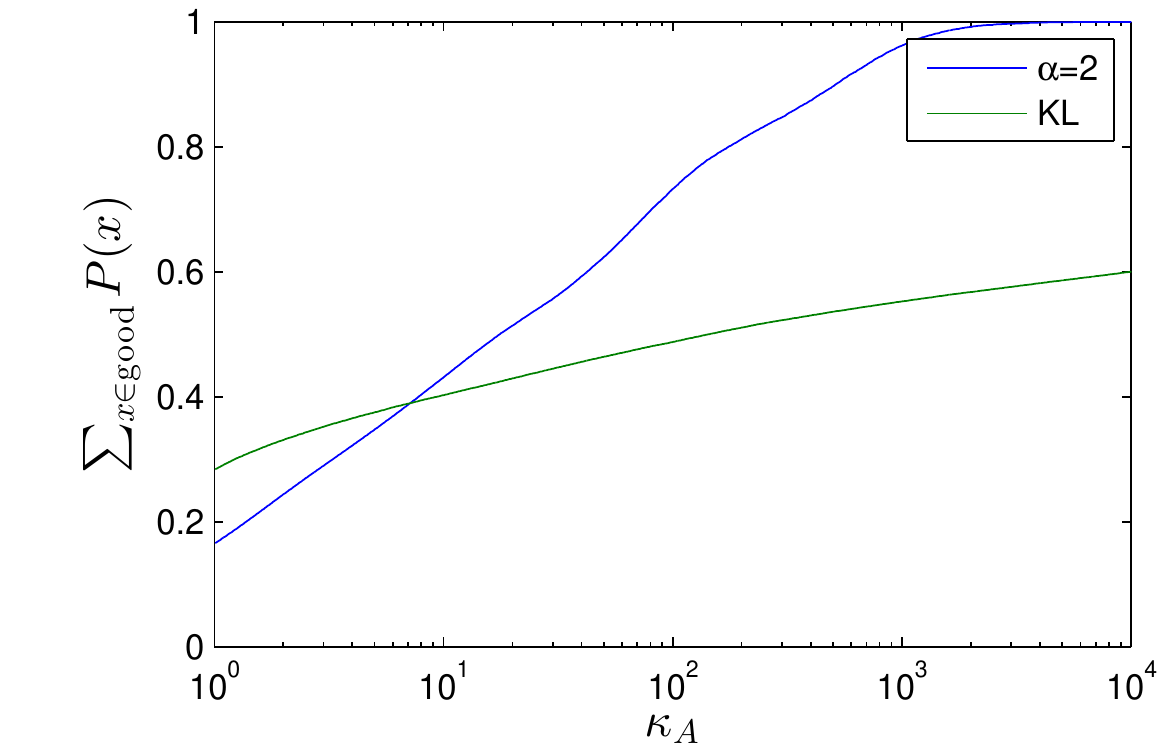}
\caption{The total probability mass that is well approximated, meaning $P(v,h) \le \kappa_A Q(v,h)$ as a function of $\kappa_A$ for an RBM with $n_v=6$ and $n_h=8$.  The left plot corresponds to $\lambda=0.1$ and the right plot $\lambda=0.01$.\label{fig:lambda}}
\end{figure}

\subsection{Scaling with $\lambda$}
In the main text we examined the scaling of $D_2$ for random RBMs with edges whose weights are drawn from $\mathcal{N}(0,1)$ for different sized graphical models.  Such Boltzmann machines are not expected to be typical of those that emerge from training in the presence of regularization, which is expected to lead to substantial weight decay for large models.  We examine the accuracy of rejection sampling, as a function of $\kappa_A$, for both mean--field approximations and $Q_{\alpha=2}$ in~\fig{lambda}. Specifically, consider an RBM with $n_v=6$ and $n_h=8$ whose weights were found by exact training and training vectors drawn from the synthetic set described in the main text with $\mathcal{N}=0.1$.  We also take  $Z_{\rm MF}=Z$ in order to simplify the comparison.

The data in~\fig{lambda} shows that for $\lambda=0.1$, the mean values of the probability such that $P(v,h)\ge \kappa Q(v,h)$ are graphically indistinguishable for both the mean--field approximation and $Q_{\alpha=2}$.  This is to be expected since $\lambda=0.1$ corresponds to relatively strong regularization and it is perhaps not surprising that if the Gibbs distribution is nearly a product distribution that optimizing either ${\rm KL}(Q|| P)$ or $D_2(P||Q)$ should lead to comparable results.  The sum of the bad probabilities was found to fall off, for large $\kappa_A$, as $\kappa_A^{-3}$ for $Q_{\alpha=2}$ and $\lambda=0.01$.  In contrast the asymptotic scaling for $Q_{\rm MF}$ for $\lambda=0.01$ is unclear for this regularization constant as the values of $\lambda$ considered are insufficient to see the asymptotic behavior of the curve.  

There are substantial differences between the fraction of configurations that are correctly handled for large $\kappa_A$ for $\lambda=0.01$.  As expected by Corollary 1, $Q_{\alpha=2}$ outperforms $Q_{\rm MF}$ in the asymptotic limit.  It perhaps is unsurprising that $Q_{\rm MF}$ can outperform $Q_{\alpha=2}$ for modestly small $\kappa_A$ because choosing the distribution to minimize the average value of $P(v,h)/Q(v,h)$ may not minimize the tail probabilities as effectively as one may like owing to the looseness of the Markov inequality.  In practice, this suggests that minimizing for even more general $\alpha$ divergences may be useful for trading off asymptotic versus short--time performance of the sampling algorithm.

We examine the same problem but for fixed $\lambda$ and variable $n_h$ in~\fig{lambda2}.  We note that for these networks that the value of $\kappa_A$ needed to obtain $99\%$ of the probability mass scales roughly as $0.27 n_h$ for $\lambda=0.1$.  This scaling should be taken with a grain of salt as we do not have enough data to meaningfully extrapolate to large system sizes.  However, the salient feature is that there is no sign of exponential divergence despite the number of edges in the graphical model tripling.  

For $\lambda=0.01$ we notice no such trend: the sum of the good probabilities at $n_h=8$ is on average less than that observed for $n_h=12$.  This is likely because the model contains $48$ edges for $n_h=8$ and if $\lambda=0.01$ the effect of the regularization term on $O_{\rm ML}$ is potentially less significant than it would be if the same distribution of weights were taken at $n_h=12$ where $72$ edges are present.  As such it is not helpful to consider how the cost of refining the mean--field approximation into the Gibbs state scales with the size of the graphical model.  More important features such as the sparsity of the graph and the presence or absence of frustration are expected to better determine the viability of these variational approximations~\cite{Jor99}.

Another interesting feature is that at $\kappa=1$ the mean values of the sum of the good probabilities in the distribution that results from rejection sampling for $n_h=8$ and $n_h=12$ are within $(1.5\pm 0.2)\%$ of each other if $\lambda=0.1$.  This is despite the fact that the Hilbert space for the $n_h=12$ case is $16$ times larger and the number of edges in the model is $50\%$ larger.  This shows that in the presence of regularization the error in variational approximations to the Gibbs distribution need not strongly depend on the size of the graph.  Similar results have been noticed for mean--field state preparations in~\cite{wiebe_quantum_2014}.  The results for $\lambda=0.01$ is much more significant with an average difference of $(11.6\pm0.2)\%$.  We suspect these differences occur because the graphs considered are not yet large enough for weak regularization to push the Gibbs state towards an approximately unimodal distribution.

\begin{figure}[t!]
\includegraphics[width=0.33\columnwidth]{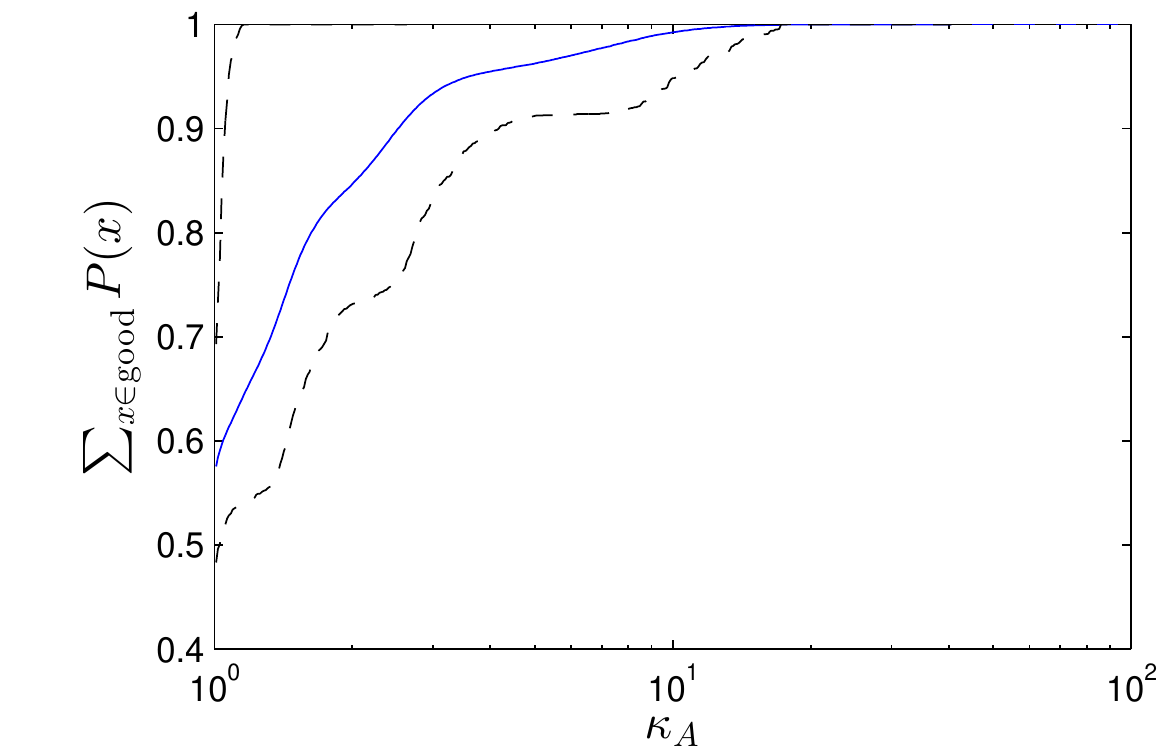}
\includegraphics[width=0.33\columnwidth]{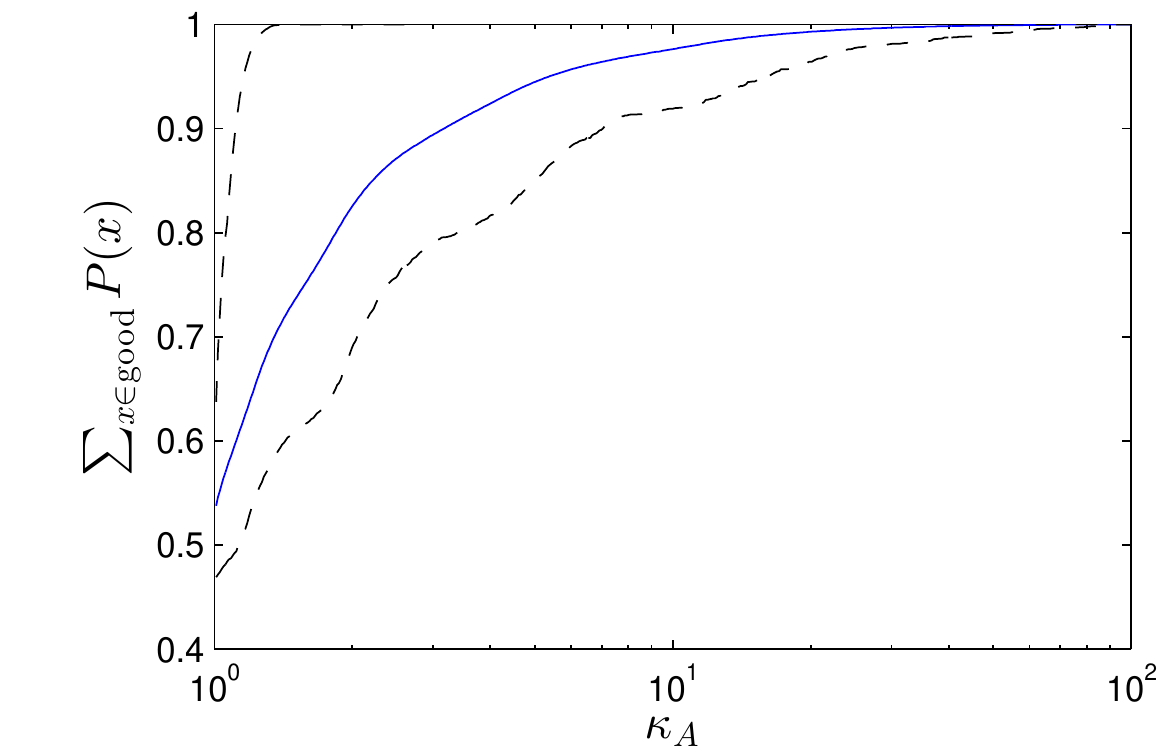}
\includegraphics[width=0.33\columnwidth]{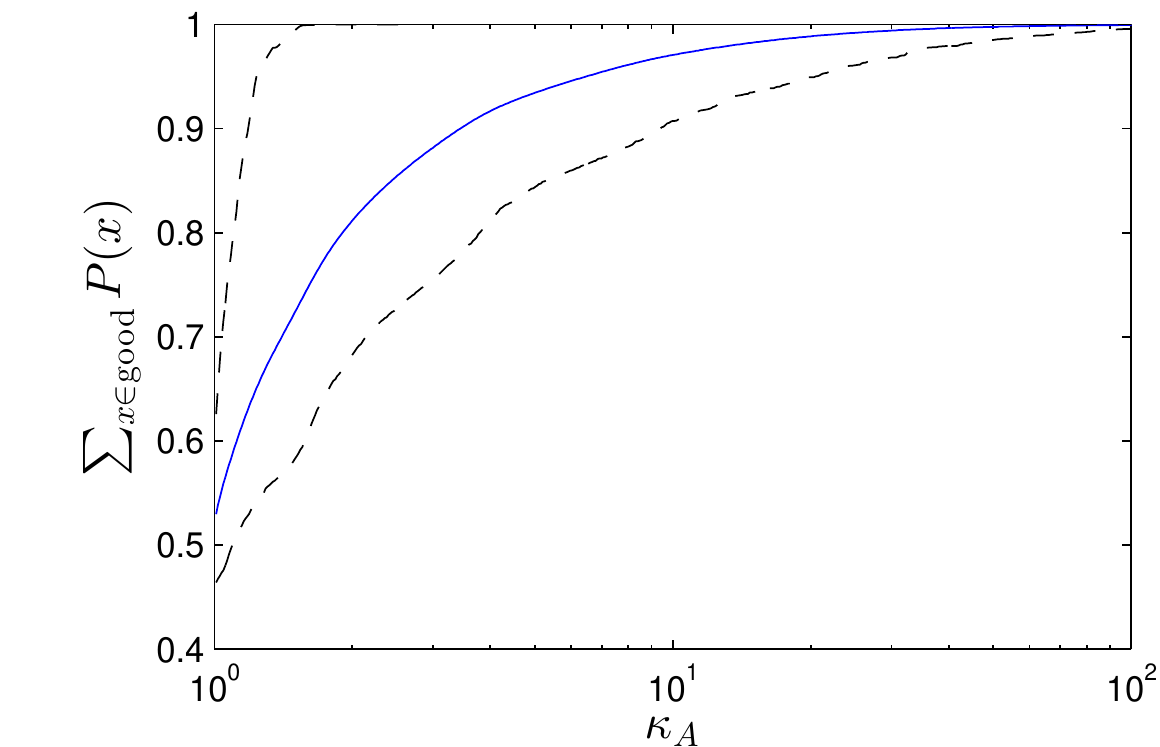}
\includegraphics[width=0.33\columnwidth]{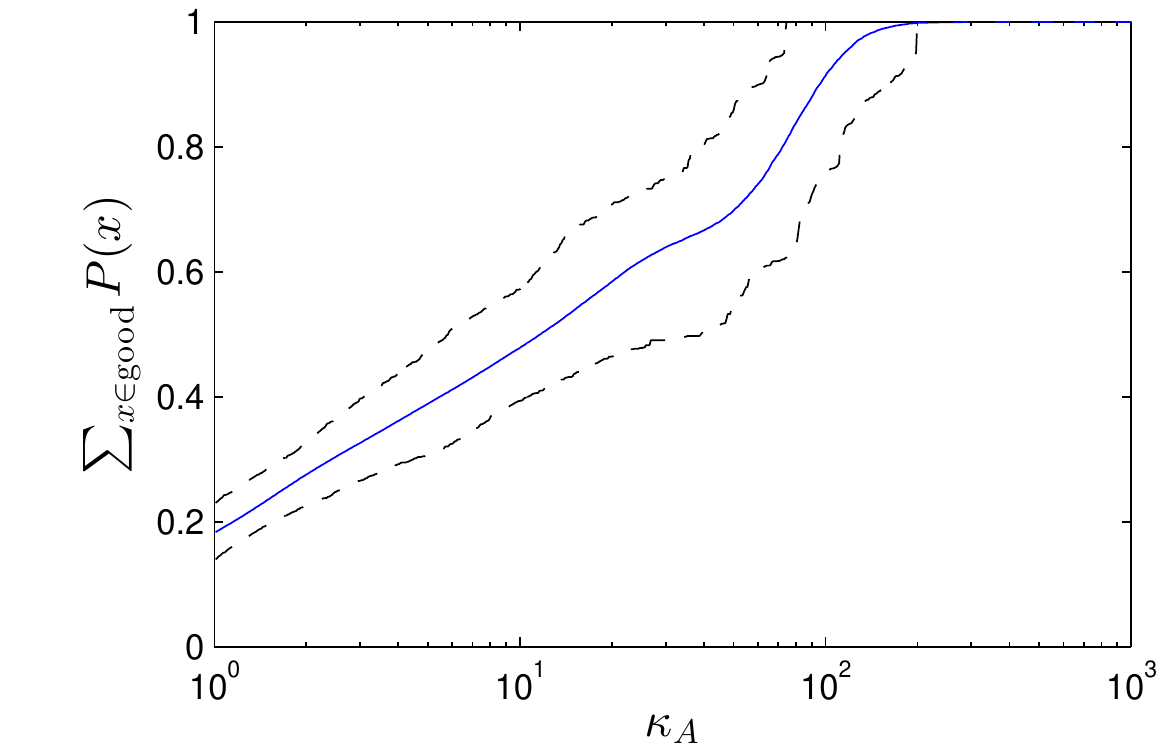}
\includegraphics[width=0.33\columnwidth]{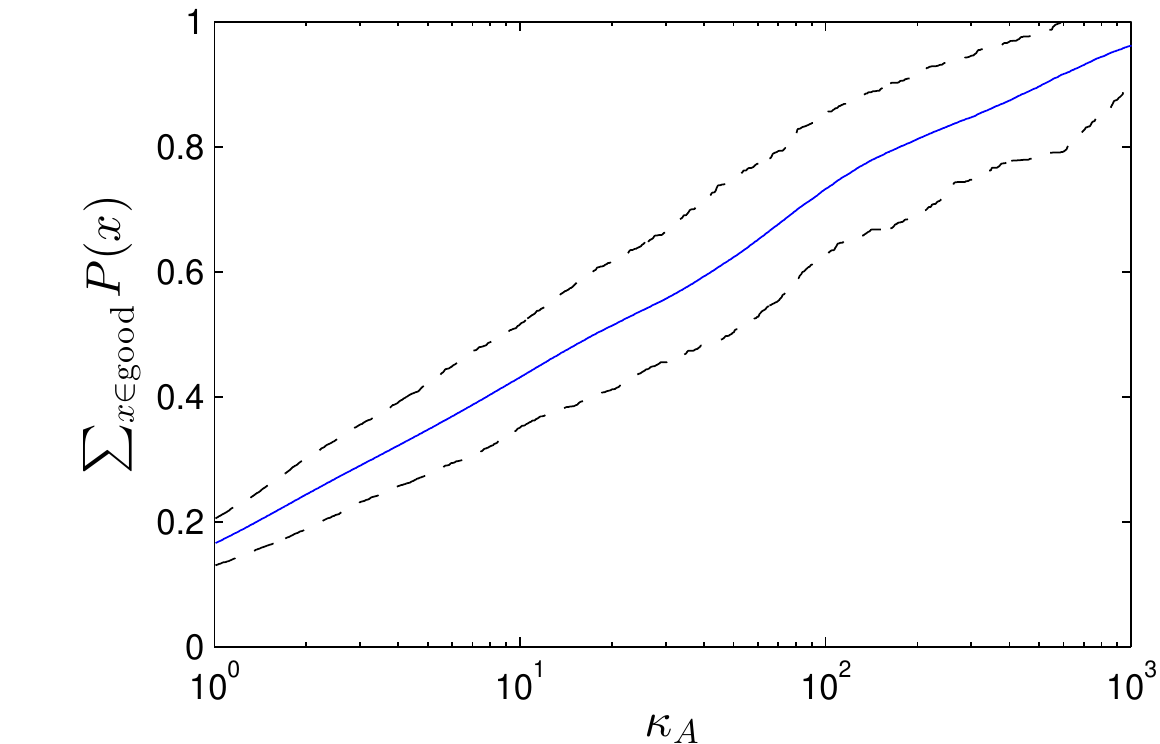}
\includegraphics[width=0.33\columnwidth]{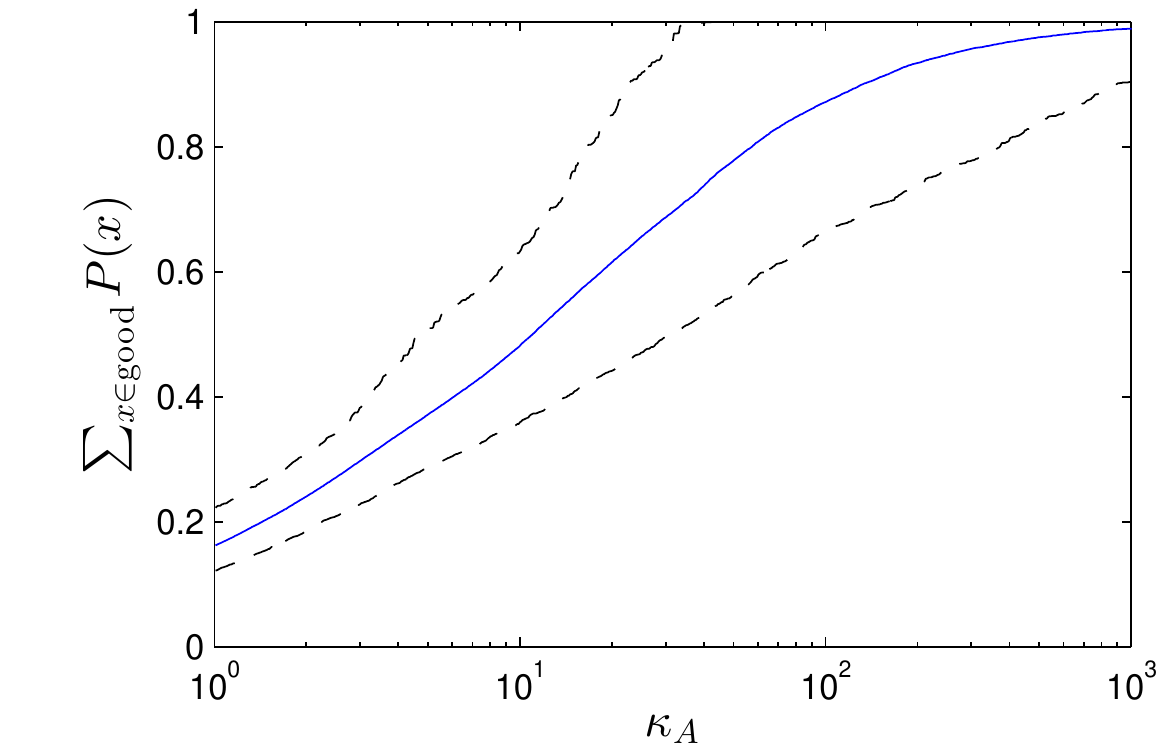}
\caption{The mean value and  95\% confidence interval for the total sum of the probability for the configurations where $P(v,h)\ge \kappa Q_{\alpha=2}(v,h)$ for $\lambda=0.1$ (top) and $\lambda=0.01$ (bottom) for trained RBMs with $n_h=4$ (left) $n_h=8$ (center) $n_h=12$ (right).\label{fig:lambda2}}
\end{figure}

\subsection{Scaling with $\kappa_A$}
The quantity $\kappa_A$ is perhaps the most important factor for determining the viability of our method relative to contrastive divergence because it dictates the success probability of the
rejection step.  Here we will provide numerical evidence in small samples that illustrates that small values of $\kappa_A$ provide comparable, or greater, accuracy than contrastive divergence training.  For all these results we again use an equal mixture of $Q_{\rm MF}$ and the uniform distribution as our instrumental distribution.  We anticipate that the use of $Q_{\alpha=2}$ will tend to result in better results for a fixed value of $\kappa_A$.

\fig{kappa} shows the difference in the value of the objective functions obtained relative to those found by exactly following the ML--objective function for an RBM with $n_v=6$ and $n_h=4$ and $\lambda=0.05$.  We observe for that data that $\kappa_A=75$ suffices to provide a mean discrepancy that is comparable to contrastive divergence.  It is important to note though that a substantial fraction of the results for even $\kappa_A=50$ are dramatically better than the results for contrastive divergence; however, the worst cases are nearly twice as bad.  The results for $\kappa_A \le 100$ show evidence of saturating and by computing the values that they saturate at we see that the data agrees well with an $e^{-0.05\kappa}$ scaling and has relatively poor agreement with powerlaw or linear scalings.  This captures the observed fact that the quality of the gradients rapidly improves as $\kappa_A$ is increased.  In particular, for $\kappa_A=200$, the discrepancies between the worst case scalings and the best case scalings of the data collapse and nearly all of the $1000$ examples tested were found to perform better than contrastive divergence.

These results do reveal an interesting feature of contrastive divergence, namely its consistency.  The best and worst case performances seem to be tightly clustered relative to those observed for our sampling algorithm.  It also seems to perform better for the first few epochs of optimization than IRS does, even in the limit of large $\kappa_A$.  This illustrates that contrastive divergence is likely to remain an algorithm of choice for serial (rather than parallel) training environments where variational approximations to the Gibbs states are expected to abjectly fail.

We examine a similar case with much weaker regularization in~\fig{kappa2}.  There we note that much larger values of $\kappa_A$ are needed to obtain good approximations to the gradient.  In particular, we observe that $\kappa_A=400$ is the first example where the best case performance beats that of contrastive divergence.  The data set is otherwise qualitatively similar to that considered in~\fig{kappa} except no evidence of training plateauing is observed within the number of epochs considered (with the exception of some of the data for $\kappa_A=200$).

\begin{figure}[t!]
\centering
\includegraphics[width=0.495\columnwidth]{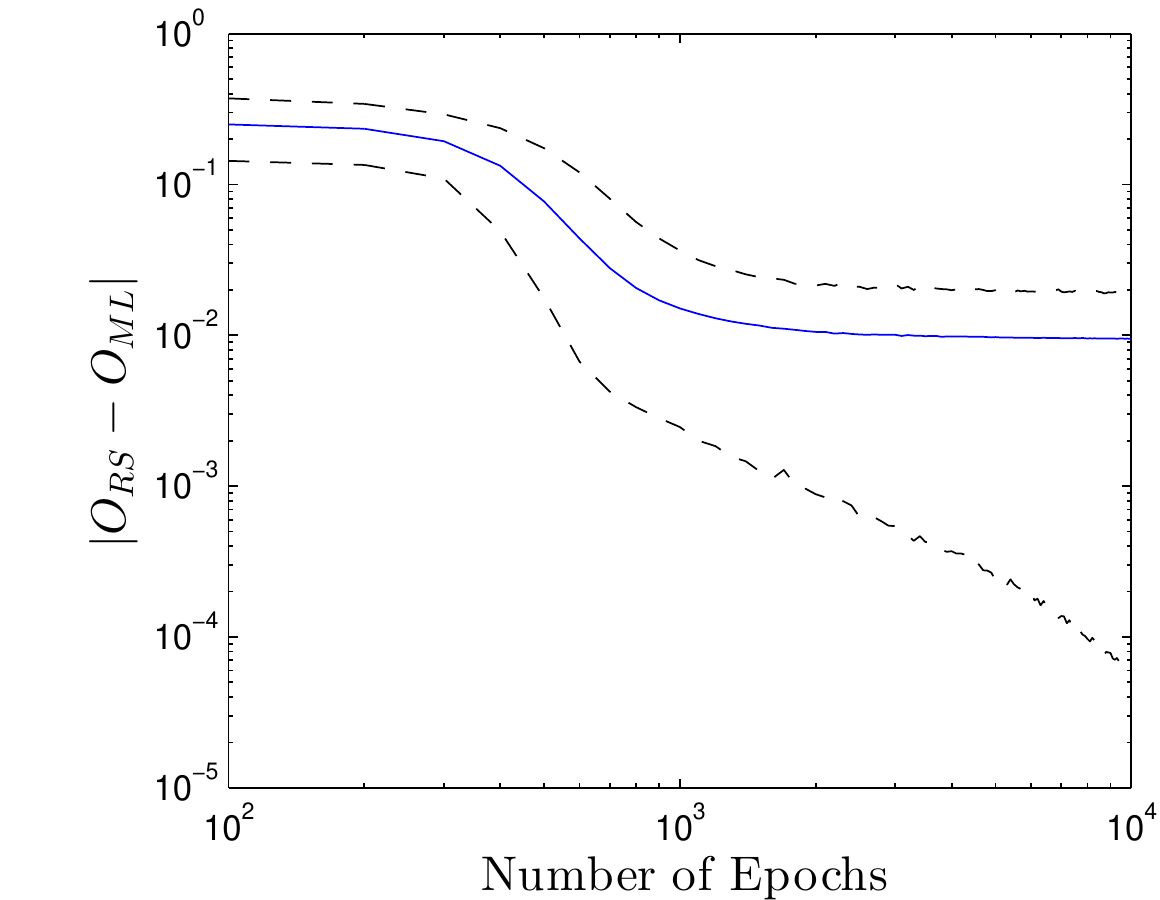}
\includegraphics[width=0.495\columnwidth]{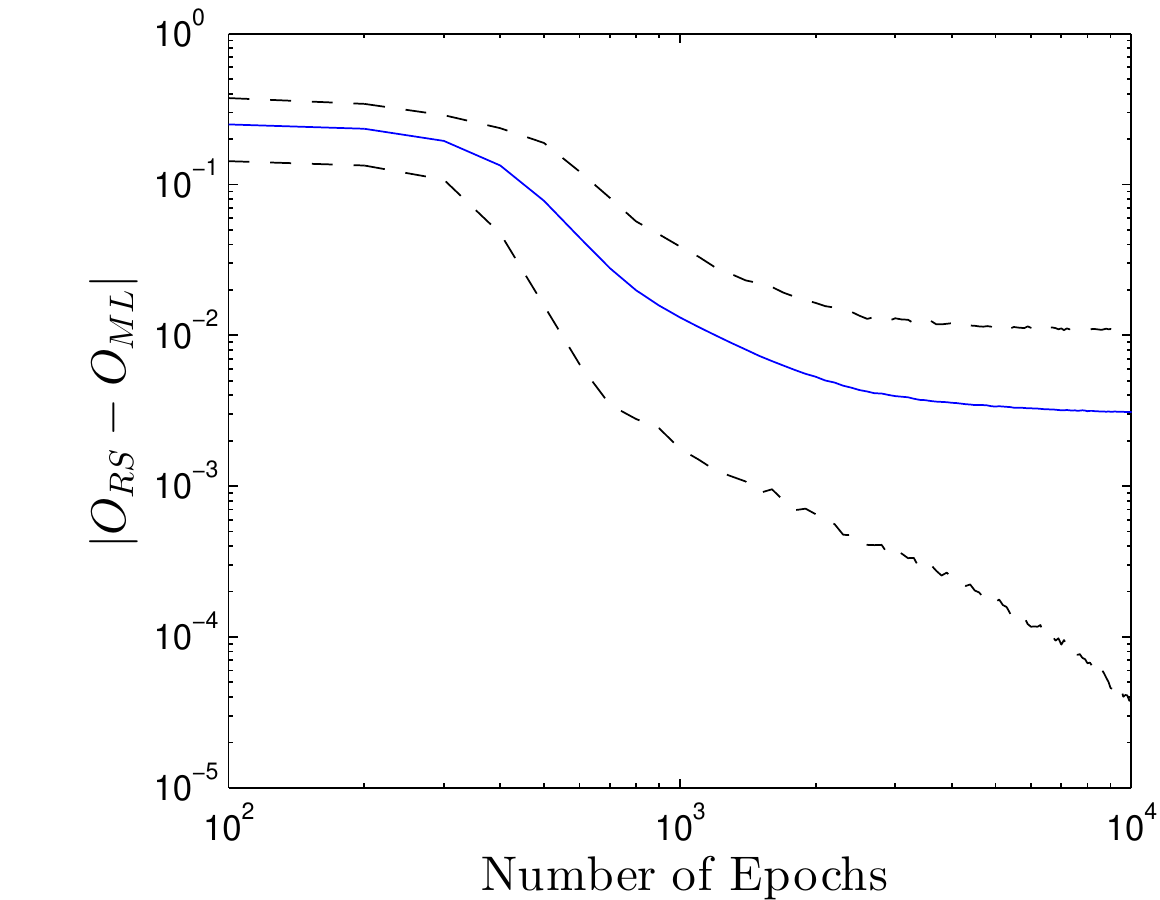}
\includegraphics[width=0.495\columnwidth]{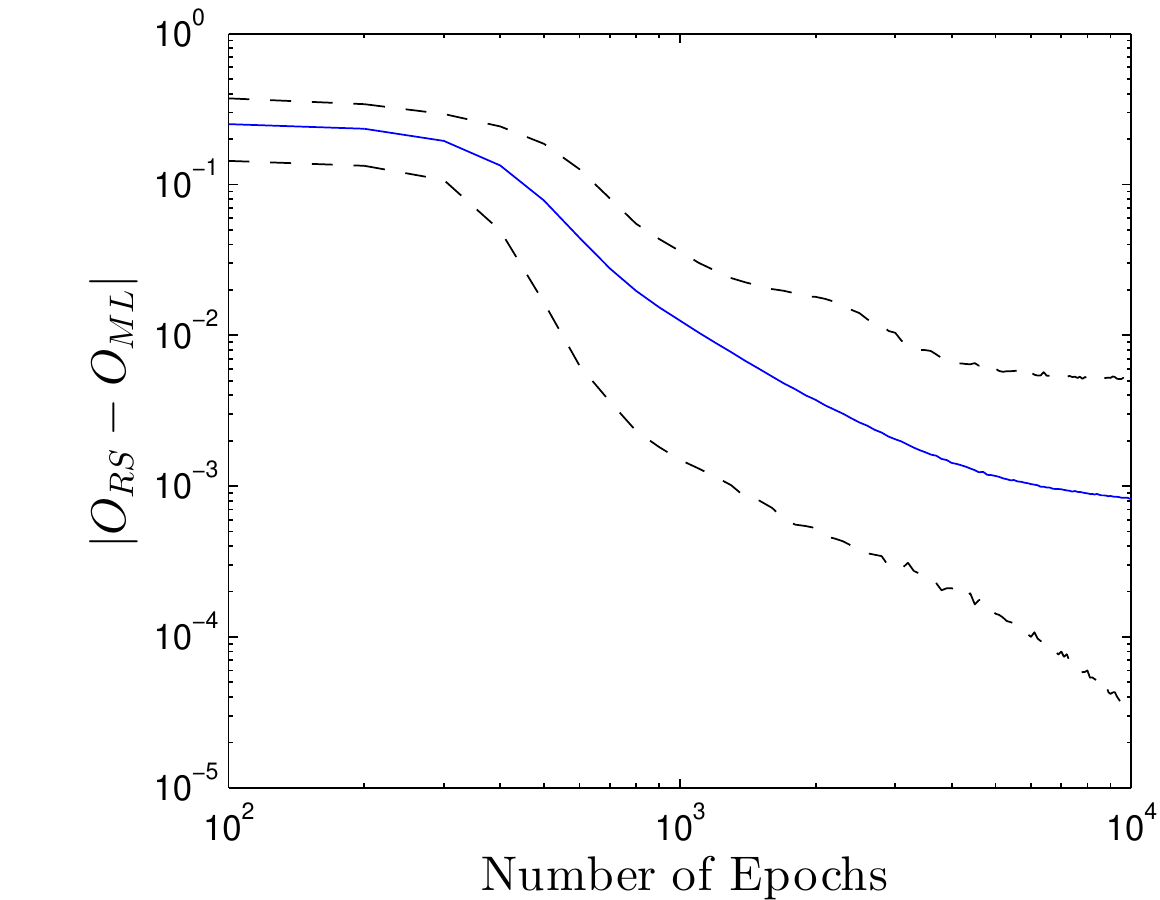}
\includegraphics[width=0.495\columnwidth]{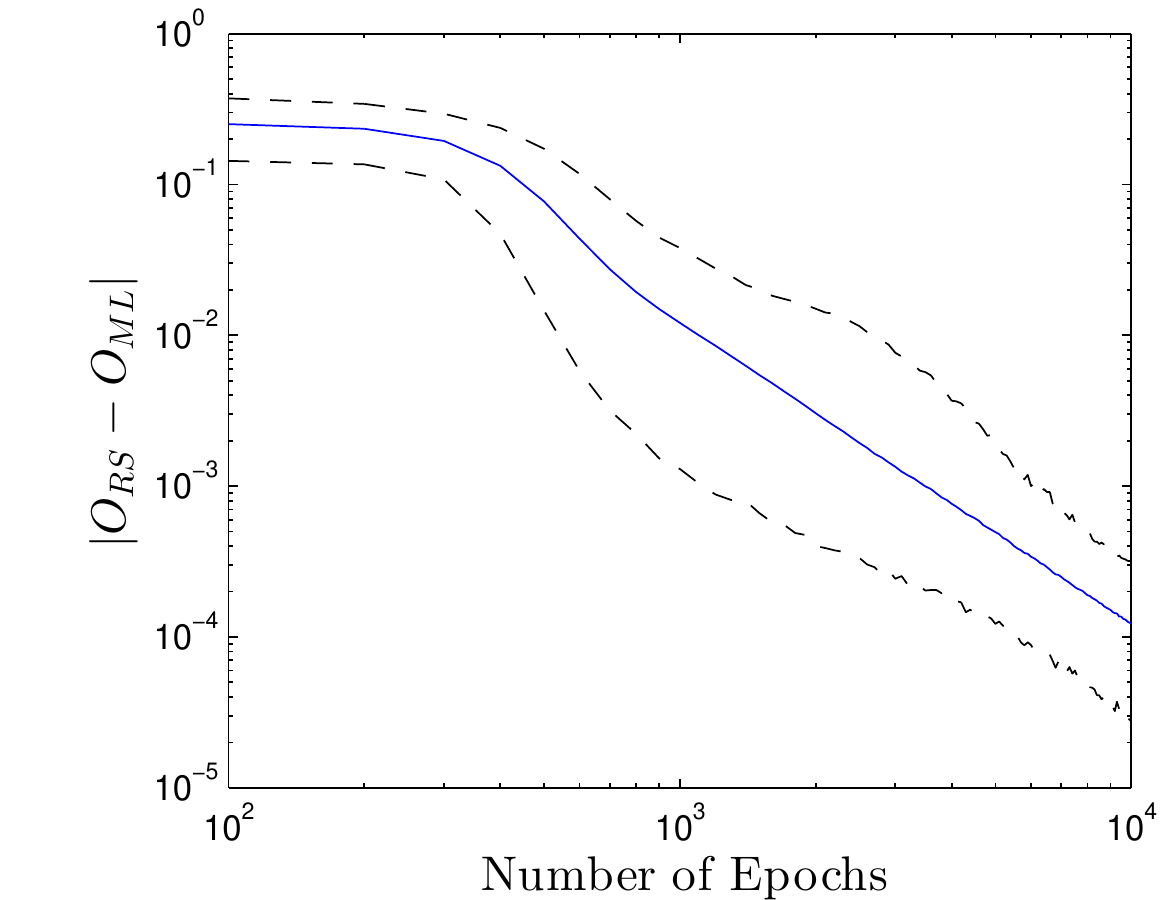}
\includegraphics[width=0.495\columnwidth]{CD6x4.pdf}
\caption{Mean and $95\%$ confidence interval for the discrepancy between the values of the ML training objective found using IRS training and CD-1 training for RBMs with $n_v=6$ and $n_h=4$, $\lambda=0.05$ and $\kappa_A = 50$ (Top left), $\kappa_A=75$ (Top right), $\kappa_A=100$ (Middle left), $\kappa_A=200$ (Middle right) and contrastive divergence training (Bottom).\label{fig:kappa}}
\end{figure}

\begin{figure}[t!]
\centering
\includegraphics[width=0.495\columnwidth]{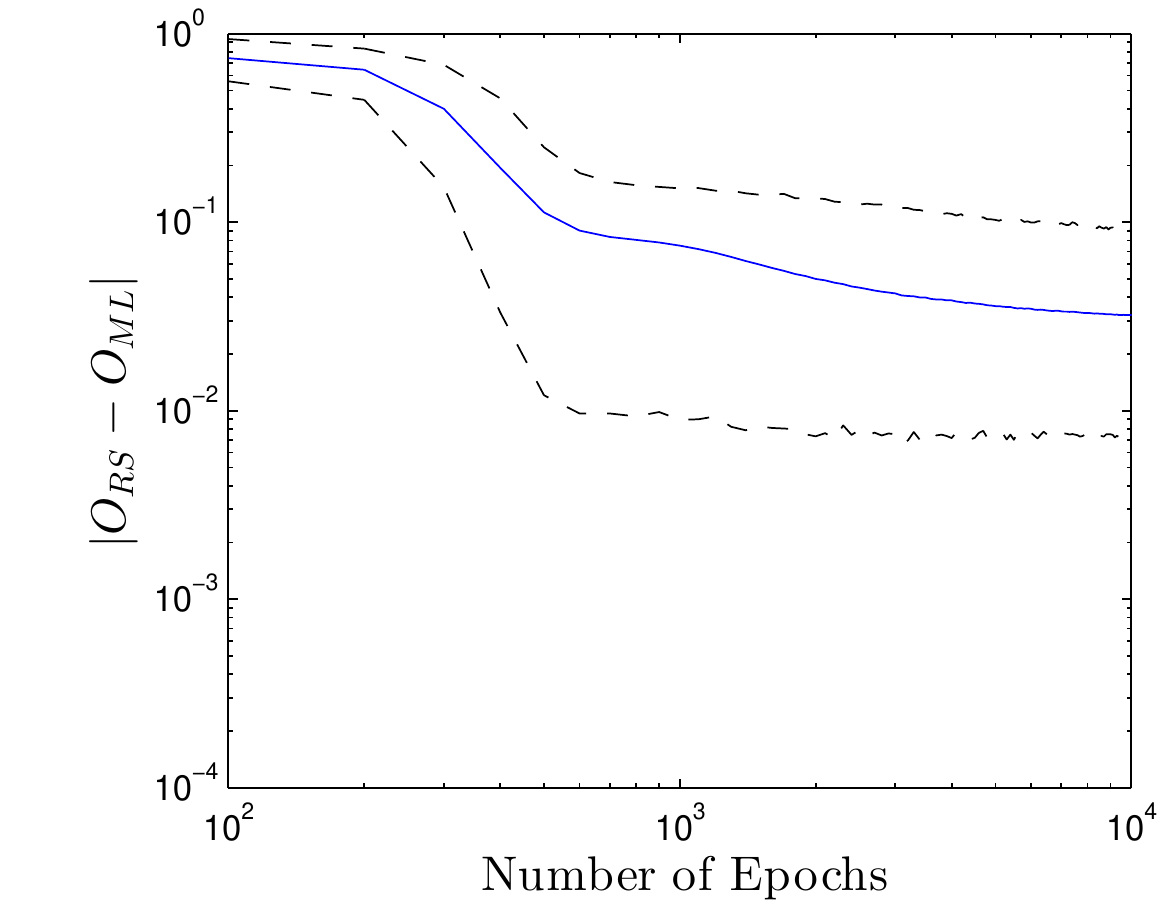}
\includegraphics[width=0.495\columnwidth]{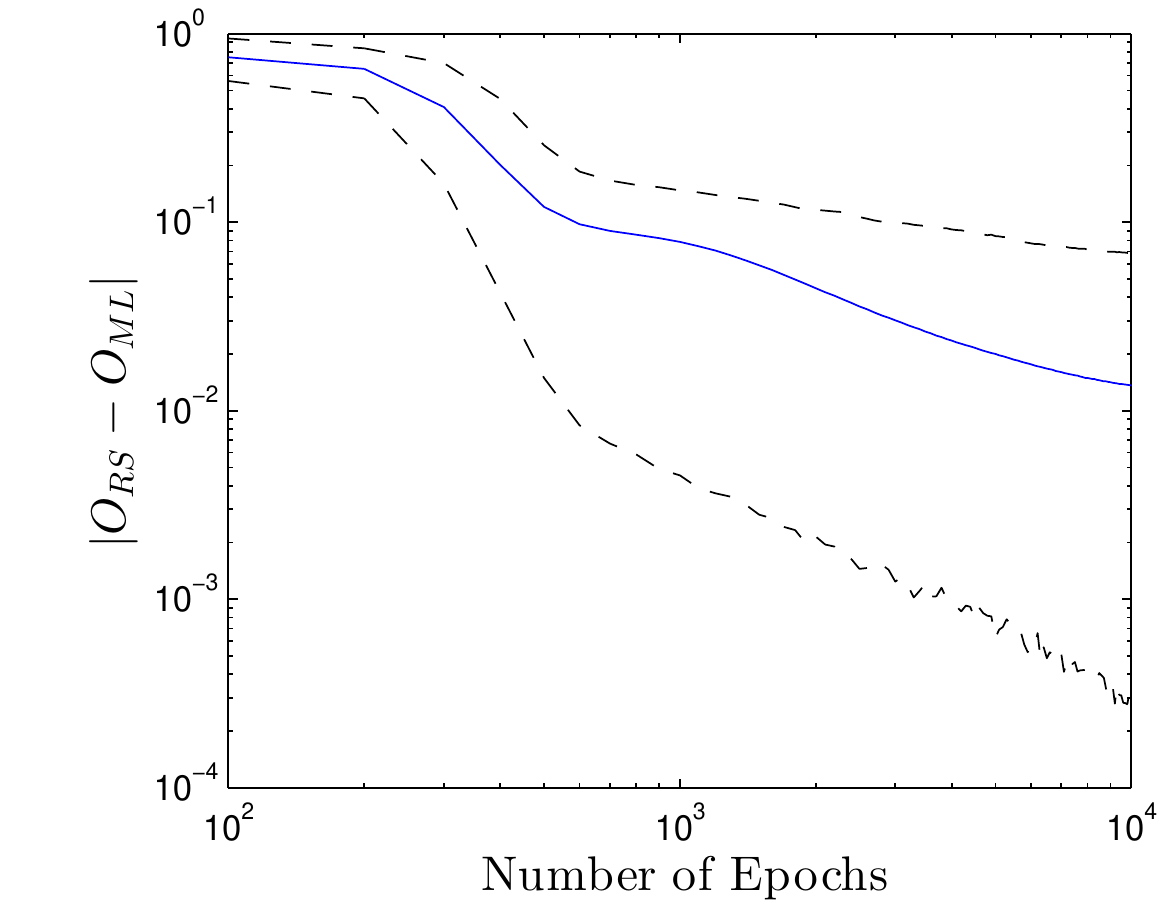}
\includegraphics[width=0.495\columnwidth]{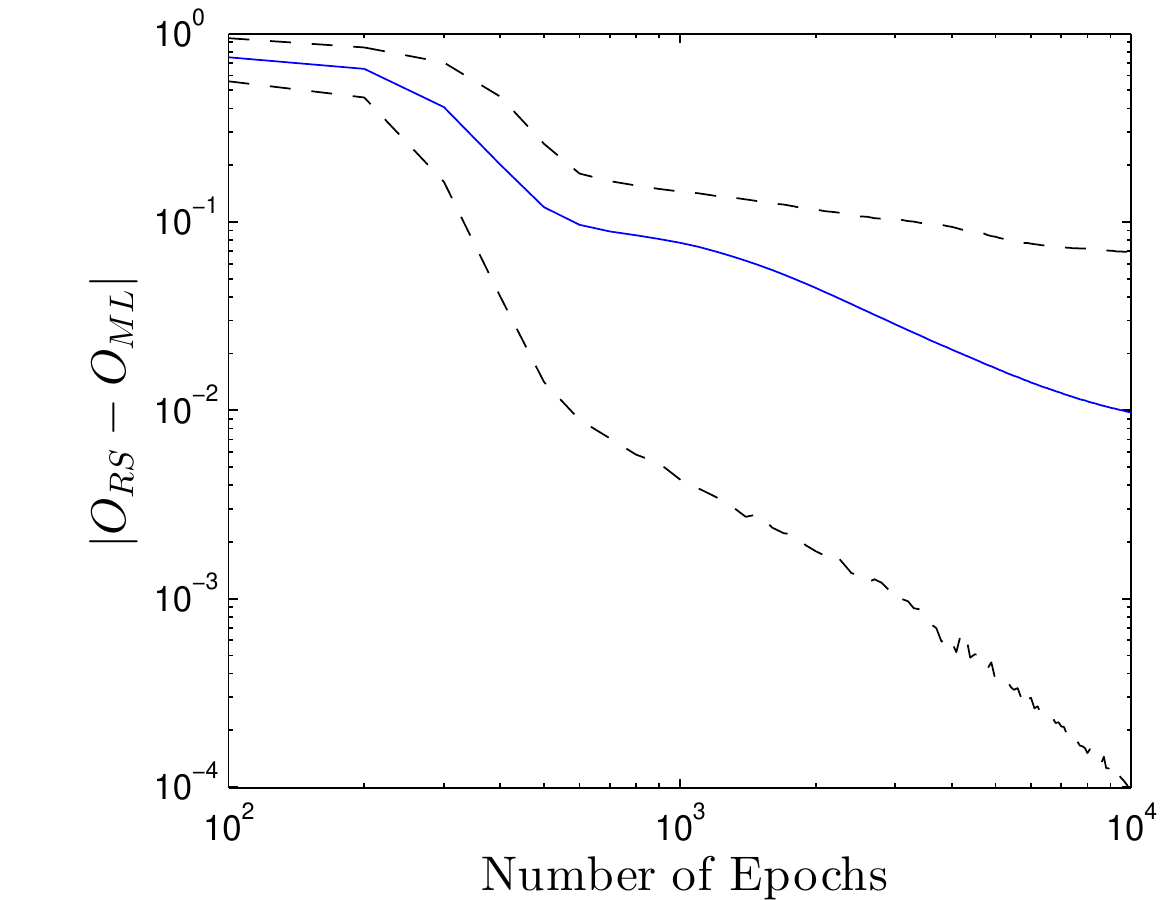}
\includegraphics[width=0.495\columnwidth]{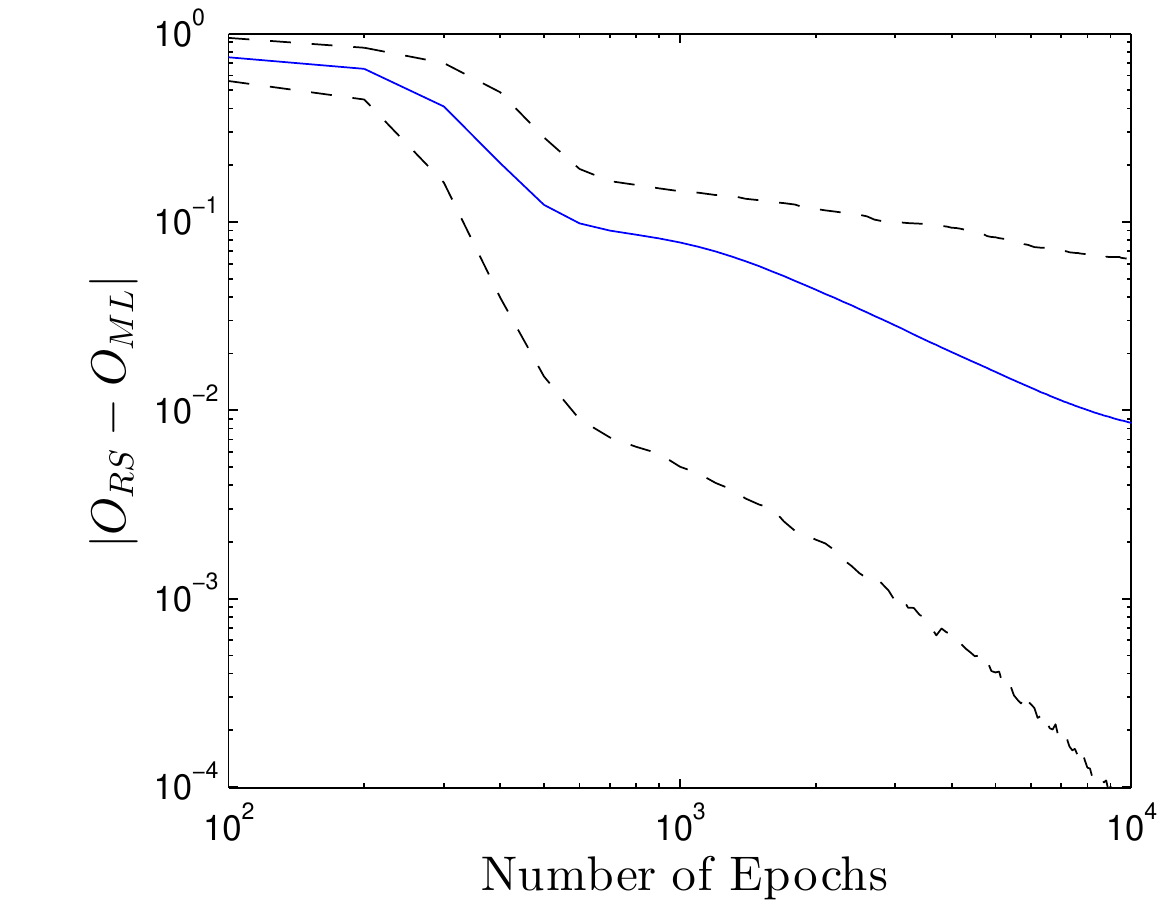}
\includegraphics[width=0.495\columnwidth]{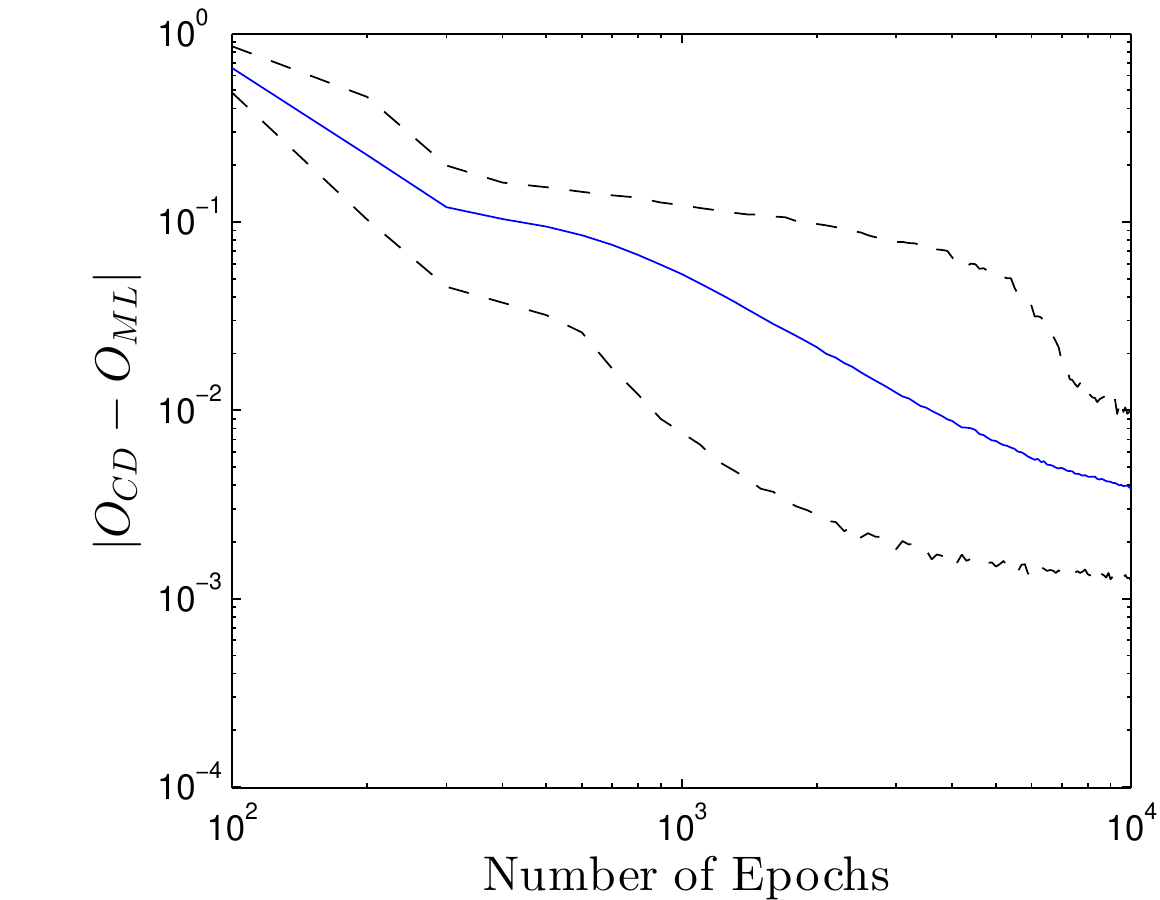}
\caption{Mean and $95\%$ confidence interval for the discrepancy between the values of the ML training objective found using IRS training and CD-1 training for RBMs with $n_v=6$ and $n_h=4$, $\lambda=0.01$ and $\kappa_A = 200$ (Top left), $\kappa_A=400$ (Top right), $\kappa_A=800$ (Middle left), $\kappa_A=1600$ (Middle right) and contrastive divergence training (Bottom).\label{fig:kappa2}}
\end{figure}

\section{Hedging strategies}
In the main text, we showed that choosing our product distribution $Q$ to minimize the $D_2(P||Q)$ rather than ${\rm KL}(Q||P)$.  However, as we noted in~\fig{lambda}, the mean--field approximation may actually yield \emph{a better approximation} to the Gibbs state than $Q_{\alpha=2}$ does if $\kappa_A$ is small.  The reason for this is that $D_2$ attempts to minimize the average ratio between the two, but in practice it may not necessarily model the high probability regions of the distribution accurately.  In contrast, the mean--field distribution aims to find the distribution that is closest to the Gibbs distribution but requires a large value of $\kappa_A$ to accurately model the tails of the probability distribution.  This begs the question of whether choosing a different instrumental distribution that combines the best features of both distributions can be used.

\begin{figure}[t!]
\centering
\includegraphics[width=0.6\columnwidth]{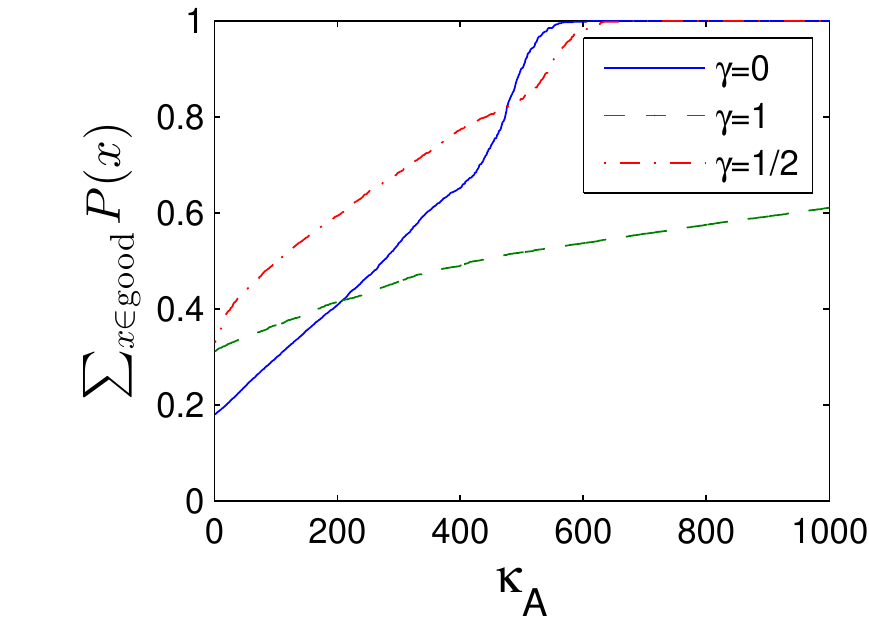}
\caption{The sum of the probability mass of the configurations that is correctly handled by the rejection for hedging strategies using different values of $\gamma$ for an RBM with $n_v=6$ and $n_h=8$ with $\lambda=0.01$.  The weights and biases of the RBM were those at the local optima of $O_{\rm ML}$\label{fig:gamma}.}
\end{figure}

A strategy proposed in~\cite{wiebe_quantum_2014} is to choose the instrumental distribution to be a combination of the mean--field distribution and one that captures the tail probability more effectively.  In~\cite{wiebe_quantum_2014} they mix uniform distribution with the mean--field distribution.  While this works well for small systems, it is not expected to work well for high--dimensional systems because the prediction shrinks exponentially with the dimension of the Hilbert space that the probability distribution is supported over.  Instead, a more sensible approach is to combine the mean--field and $Q_{\alpha=2}$ in the following way
\begin{equation}
Q(v,h) = \gamma Q_{\rm MF}(v,h)+(1-\gamma)Q_{\alpha=2}(v,h),
\end{equation}
for $\gamma\in [0,1]$.


We examine this strategy in~\fig{gamma} where we observe that for small $\kappa_A\approx 1$ the quality of the approximation yielded at $\gamma=1/2$ is comparable to that at $\gamma=1$.  This is surprising because averaging the mean--field approximation with one that is known to give a worse approximation may be expected to result in an inferior approximation.  For larger values of $\kappa_A$, the $\gamma=1/2$ hedged approximation outperforms either $Q_{\rm MF}$ or $Q_{\alpha=2}$ individually.  In fact, at $\kappa_A \approx 211$ the distribution with $\gamma=1/2$ outperforms both of the distributions by roughly $45\%$.  This not only suggests that hedging can lead to substantial improvements but also suggests that choosing different values of $\alpha$ to interpolate between the performance of $\alpha=2$ and $\alpha=0$ (mean--field) may also improve the performance of IRS for small kalues of $\kappa_A$.

\section{Detailed algorithm for computing gradients}
We provide  a more detailed algorithm below for computing the gradient of the ML objective function using IRS.  The algorithm utilizes subroutines $Q$ and  $\mathcal{Q}$ that provide an estimate of the probability of a given configuration of hidden and visible units.  These functions will often correspond to a tractable approximation to the Gibbs
distribution such as the mean--field approximation or a product distribution that minimizes the $\alpha=2$ divergence with the Gibbs state.  We also assume that a sampling procedure is known for these distributions.

\begin{algorithm}[h!]
\rule{\linewidth}{1pt}
\begin{algorithmic}
\Require Initial model weights $w$, visible biases $b$, hidden biases $d$,  $\kappa_A$, a set of training vectors $x_{\rm train}$, a regularization term $\lambda$, a learning rate $r$ and the functions $Q(v,h)$, $\mathcal{Q}(h;v)$, $Z_Q$, $Z_{Q(h;v)}$.
\Ensure $\texttt{gradMLw},\texttt{gradMLb},\texttt{gradMLd}$.
\vskip0.2em
\hrule
\vskip0.2em
{
\small
\For{$i=1:N_{\rm train}$}
\State ${\texttt{success}}\gets 0$
\While{$\texttt{success}=0$}\Comment{Draw samples from approximate model distribution.}
\State Draw sample $(v,h)$ from $Q(v,h)$.  
\State $E_{s} \gets E(v,h)$ 
\State Set ${\texttt{success}}$ to $1$ with probability $\min(1, e^{-Es}/(Z_Q \kappa_A Q(v,h)))$.
\EndWhile
\State $\texttt{modelV}[i] \gets v$.
\State $\texttt{modelH}[i] \gets h$.
\State ${\texttt{success}}\gets 0$
\State $v\gets x_{\rm train}[i]$.
\While{$\texttt{success}=0$}\Comment{Draw samples from approximate data distribution.}
\State Draw sample $h$ from $\mathcal{Q}(h;v)$.
\State $E_{s} \gets E(v,h)$.
\State Set $\texttt{success}$ to $1$ with probability $\min(1, e^{-Es}/(Z_{Q(v,h)} \kappa_A {\mathcal{Q}}(v,h)))$.
\EndWhile
\State $\texttt{dataV}[i] \gets v$.
\State $\texttt{dataH}[i] \gets h$.
\EndFor
\For{each visible unit $i$ and hidden unit $j$}
\State $\texttt{gradMLw}[i,j] \gets r\left(\frac{1}{N_{\rm train}}\sum_{k=1}^{N_{\rm train}}\left(\texttt{dataV}[k,i]\texttt{dataH}[k,j]-\texttt{modelV}[k,i]\texttt{modelH}[k,j]\right)-\lambda w_{i,j}\right)$.
\State $\texttt{gradMLb}[i] \gets r\left(\frac{1}{N_{\rm train}}\sum_{k=1}^{N_{\rm train}}\left(\texttt{dataV}[k,i]-\texttt{modelV}[k,i]\right)\right)$.
\State $\texttt{gradMLd}[j] \gets r\left(\frac{1}{N_{\rm train}}\sum_{k=1}^{N_{\rm train}}\left(\texttt{dataH}[k,j]-\texttt{modelH}[k,j]\right)\right)$.
\EndFor
}
\end{algorithmic}
\rule{\linewidth}{1pt}
\caption{\label{alg:graderror}Algorithm for estimating $\nabla O_{\rm ML}$.}
\end{algorithm}


\begin{thebibliography}{10}

\bibitem{wiebe_quantum_2014}
Nathan Wiebe, Ashish Kapoor, and Krysta~M Svore.
\newblock Quantum deep learning.
\newblock {\em arXiv preprint arXiv:1412.3489}, 2014.

\bibitem{hinton_training_2002}
Geoffrey Hinton.
\newblock Training products of experts by minimizing contrastive divergence.
\newblock {\em Neural computation}, 14(8):1771--1800, 2002.

\bibitem{JH11}
Navdeep Jaitly and Geoffrey Hinton.
\newblock Learning a better representation of speech soundwaves using
  restricted boltzmann machines.
\newblock In {\em Acoustics, Speech and Signal Processing (ICASSP), 2011 IEEE
  International Conference on}, pages 5884--5887. IEEE, 2011.

\bibitem{EHW+14}
SM~Ali Eslami, Nicolas Heess, Christopher~KI Williams, and John Winn.
\newblock The shape boltzmann machine: a strong model of object shape.
\newblock {\em International Journal of Computer Vision}, 107(2):155--176,
  2014.

\bibitem{WH02}
Max Welling and Geoffrey~E Hinton.
\newblock A new learning algorithm for mean field boltzmann machines.
\newblock In {\em Artificial Neural Networks—ICANN 2002}, pages 351--357.
  Springer, 2002.

\bibitem{Jor99}
Michael~I Jordan, Zoubin Ghahramani, Tommi~S Jaakkola, and Lawrence~K Saul.
\newblock An introduction to variational methods for graphical models.
\newblock {\em Machine learning}, 37(2):183--233, 1999.

\bibitem{MG04}
Iain Murray and Zoubin Ghahramani.
\newblock Bayesian learning in undirected graphical models: approximate mcmc
  algorithms.
\newblock In {\em Proceedings of the 20th conference on Uncertainty in
  artificial intelligence}, pages 392--399. AUAI Press, 2004.

\bibitem{Min05}
Tom Minka.
\newblock Divergence measures and message passing.
\newblock Technical report, Technical report, Microsoft Research, 2005.

\bibitem{Ben09}
Yoshua Bengio.
\newblock Learning deep architectures for ai.
\newblock {\em Foundations and trends{\textregistered} in Machine Learning},
  2(1):1--127, 2009.

\bibitem{ST10}
Ilya Sutskever and Tijmen Tieleman.
\newblock On the convergence properties of contrastive divergence.
\newblock In {\em International Conference on Artificial Intelligence and
  Statistics}, pages 789--795, 2010.

\bibitem{WH03}
Wim Wiegerinck, Tom Heskes, et~al.
\newblock Fractional belief propagation.
\newblock {\em Advances in Neural Information Processing Systems}, pages
  455--462, 2003.

\bibitem{wainwright_tree_2013}
Martin~J Wainwright, Tommi~S Jaakkola, and Alan~S Willsky.
\newblock Tree-reweighted belief propagation algorithms and approximate ml
  estimation by pseudo-moment matching.
\newblock In {\em Workshop on Artificial Intelligence and Statistics},
  volume~21, page~97. Society for Artificial Intelligence and Statistics Np,
  2003.

\bibitem{TH09}
Tijmen Tieleman and Geoffrey Hinton.
\newblock Using fast weights to improve persistent contrastive divergence.
\newblock In {\em Proceedings of the 26th Annual International Conference on
  Machine Learning}, pages 1033--1040. ACM, 2009.

\bibitem{ORR13}
Maris Ozols, Martin Roetteler, and J{\'e}r{\'e}mie Roland.
\newblock Quantum rejection sampling.
\newblock {\em ACM Transactions on Computation Theory (TOCT)}, 5(3):11, 2013.

\bibitem{Poulin}
David Poulin and Pawel Wocjan.
\newblock Sampling from the thermal quantum gibbs state and evaluating
  partition functions with a quantum computer.
\newblock {\em Physical review letters}, 103(22):220502, 2009.

\end{thebibliography}


\end{document}